\documentclass{article}

\usepackage{microtype}
\usepackage{graphicx}
\usepackage{subcaption}
\usepackage{booktabs}

\usepackage{hyperref}
\usepackage{multirow}

\usepackage[accepted]{icml2026}

\usepackage{amsmath}
\usepackage{amssymb}
\usepackage{mathtools}
\usepackage{amsthm}

\usepackage{pifont}

\usepackage[capitalize,noabbrev]{cleveref}

\usepackage{fontawesome5}

\theoremstyle{plain}
\newtheorem{theorem}{Theorem}[section]
\newtheorem{proposition}[theorem]{Proposition}

\theoremstyle{definition}

\newtheorem{assumption}[theorem]{Assumption}
\theoremstyle{remark}
\newtheorem{remark}[theorem]{Remark}

\icmltitlerunning{Dual-Coupled PnP Diffusion (DC-PnPDP)}

\begin{document}

\twocolumn[
    \icmltitle{Plug-and-Play Diffusion Meets ADMM: Dual-Variable Coupling for Robust Medical Image Reconstruction}

    \icmlsetsymbol{equal}{*}
    \icmlsetsymbol{corr}{\faIcon{envelope}}

    \begin{icmlauthorlist}
        \icmlauthor{Chenhe Du}{shanghaitech}
        \icmlauthor{Xuanyu Tian}{shanghaitech}
        \icmlauthor{Qing Wu}{shanghaitech}
        \icmlauthor{Muyu Liu}{shanghaitech}
        \icmlauthor{Jingyi Yu}{shanghaitech}
        \icmlauthor{Hongjiang Wei}{sjtu}
        \icmlauthor{Yuyao Zhang}{shanghaitech,corr}
    \end{icmlauthorlist}

    \icmlaffiliation{shanghaitech}{ShanghaiTech University; }
    \icmlaffiliation{sjtu}{Shanghai Jiao Tong University}

    \icmlcorrespondingauthor{Yuyao Zhang}{zhangyy8@shanghaitech.edu.cn}

    \icmlkeywords{Machine Learning, ICML}

    \vskip 0.3in
]

\printAffiliationsAndNotice{}

\begin{abstract}
    Plug-and-Play diffusion prior (PnPDP) frameworks have emerged as a powerful paradigm for solving imaging inverse problems by treating pretrained generative models as modular priors. However, we identify a critical flaw in prevailing PnP solvers (e.g., based on HQS or Proximal Gradient): they function as \textit{memoryless operators}, updating estimates solely based on instantaneous gradients. This lack of historical tracking inevitably leads to non-vanishing \textit{steady-state bias}, where the reconstruction fails to strictly satisfy physical measurements under heavy corruption. To resolve this, we propose \textbf{Dual-Coupled PnP Diffusion (DC-PnPDP)}, which restores the classical \textit{dual variable} to provide integral feedback, progressively enforce agreement between the data-consistency and prior. However, this rigorous geometric coupling introduces a secondary challenge: the accumulated dual residuals exhibit spectrally colored, structured artifacts that violate the Additive White Gaussian Noise (AWGN) assumption of diffusion priors, causing severe hallucinations. To bridge this gap, we introduce \textbf{Spectral Homogenization (SH)}, a frequency-domain adaptation mechanism that modulates these structured residuals into statistically compliant \textit{pseudo-AWGN} inputs. This effectively aligns the solver's rigorous optimization trajectory with the denoiser's valid statistical manifold. Extensive experiments on CT and MRI reconstruction demonstrate that our approach resolves the bias-hallucination trade-off, achieving state-of-the-art fidelity with significantly accelerated convergence. The code is available at \url{https://github.com/duchenhe/DC-PnPDP}.
\end{abstract}

\section{Introduction}
\label{sec:intro}

Medical image reconstruction sits at the core of modern diagnostic workflows, bridging the gap between raw sensor data and clinical interpretation. In modalities such as Computed Tomography (CT) and Magnetic Resonance Imaging (MRI), physical constraints—ranging from reduced radiation dose to accelerated acquisition times—inevitably render the reconstruction task an ill-posed inverse problem. Mathematically, this is modeled as recovering a high-fidelity image $\mathbf{x} \in \mathbb{R}^n$ from degraded measurements $\mathbf{y}$:
\begin{equation}
    \mathbf{y} = \mathbf{A}\mathbf{x} + \mathbf{n},
    \label{eq:inverse_prob}
\end{equation}
where $\mathbf{A}$ denotes the forward operator (e.g., Radon transform or Fourier encoding) and $\mathbf{n}$ represents measurement noise. To resolve the ill-posedness, one must introduce prior knowledge to restrict the solution space to valid anatomical manifolds. Modern approaches increasingly leverage Diffusion Models (DMs) \cite{ho2020denoising, song2020score} as learned priors. The prevailing paradigm, known as Plug-and-Play Diffusion Prior (PnPDP) \cite{zheng2025inversebench}, decomposes the intractable posterior inference into two alternating sub-problems: a \textit{data-consistency step} that enforces fidelity to $\mathbf{y}$, and a \textit{prior step} that projects the estimate onto the data manifold using a pretrained diffusion model. This modularity has spurred a flurry of research: some works focus on designing more expressive diffusion backbones (enhancing the prior)~\cite{karras2022elucidating,dhariwal2021diffusion,rombach2022high,peebles2023scalable}, while others develop specialized solvers for the measurement operator (refining the likelihood)~\cite{chung2022diffusion,wang2022zero,song2021solving,chung2022improving,chung2024decomposed,zhu2023denoising,yang2023dsg,song2024solving,du2024dper,zhang2025improving}.

Despite the empirical success of PnP diffusion, we identify a subtle yet critical flaw in current solvers (e.g., those based on Half-Quadratic Splitting (HQS) or Proximal Gradient methods): they are inherently \textit{memoryless}. In each iteration, these algorithms update the solution based solely on the \textit{instantaneous} gradient of the data-fidelity term. From a control theory perspective, they function as Proportional (P) controllers \cite{bequette2003process}. It is a well-established theoretical result that P-controllers cannot eliminate steady-state bias when the system faces significant resistance (e.g., heavy measurement corruption or ill-conditioning). Consequently, current PnPDP solvers often converge to a biased equilibrium where the reconstruction fails to strictly satisfy the measurement constraints, effectively ``compromising" between the physics and the prior. This limitation is particularly pernicious in medical imaging: unlike natural image synthesis where perceptual plausibility is paramount, diagnostic reconstruction demands rigorous fidelity to the observed measurements $\mathbf{y}$. A solver that settles for a ``visually pleasing" approximation while violating physical constraints inherently compromises clinical reliability.

To eliminate this bias, classical optimization theory offers a rigorous solution: the \textit{Dual Variable} (or Lagrange Multiplier). As exemplified in the Alternating Direction Method of Multipliers (ADMM) \cite{boyd2011distributed}, the dual variable acts as an integral memory, accumulating historical constraint violations over time to generate a corrective force. This mechanism provides an integral feedback effect by accumulating historical consensus errors, thereby progressively driving the data-consistency and prior variables toward agreement under standard ADMM assumptions.

However, simply re-injecting this classical mechanism into modern diffusion loops proves practically non-trivial. We uncover a fundamental distributional conflict: the dual variable, by design, accumulates \textit{structured} errors (e.g., directional streaks in CT or coherent aliasing in MRI) to correct the optimization trajectory. When these structured, spectrally colored residuals are fed back into the diffusion model (which is strictly trained on \textit{Additive White Gaussian Noise}, AWGN), they constitute Out-of-Distribution (OOD) inputs. This mismatch causes the diffusion prior to misinterpret optimization artifacts as semantic content, leading to severe hallucinations or reconstruction divergence.
This creates a fundamental geometric-statistical conflict: the dual variable is essential for the integral action in optimization, ensuring convergence, yet its structured spectral bias violates the i.i.d. Gaussian assumption of pretrained diffusion priors, leading to severe hallucination.

In this work, we propose a principled framework to resolve this conflict, termed \textbf{Dual-Coupled PnP Diffusion Prior(DC-PnPDP)}. Our method bridges the gap between the rigorous geometry of ADMM and the statistical requirements of diffusion models. Instead of discarding the dual variable (as in current SOTA methods) or naively feeding it to the network (which causes OOD risks), we introduce a \textit{Spectral Homogenization (SH)} mechanism. This module operates in the frequency domain to modulate the structured ADMM residuals, complementing them with synthetic noise to construct \textit{Pseudo-AWGN} inputs on the fly. This allows the diffusion prior to operate within its valid statistical range while simultaneously preserving the dual-driven force essential for solving the optimization problem without bias. Our specific contributions are:
\begin{itemize}
    \item \textbf{Restoring the Dual Mechanism:} We re-introduce the explicit ADMM dual variable into the PnP diffusion loop, demonstrating that this ``memory'' term is essential for eliminating steady-state bias and ensuring robustness against heavy corruption.
    \item \textbf{Spectral Homogenization:} We propose a lightweight frequency-domain projection that transforms structured optimization residuals into spectrally whitened inputs. This effectively aligns the solver's trajectory with the denoiser's training distribution.
    \item \textbf{Efficiency and Performance:} Extensive experiments on limited-view CT and Accelerated MRI demonstrate that DC-PnPDP achieves superior inference speed (approx. $3\times$ faster than baselines) and reconstruction fidelity, establishing a new state-of-the-art (SOTA) for diffusion-based inverse solvers.
\end{itemize}

\section{Related Work}
\label{app:related_work}

\paragraph{Plug-and-play ADMM with learned denoisers.}
The idea of combining physical forward models with external denoisers was first formalized in the plug-and-play (PnP) framework, where the proximal map associated with an explicit regularizer is replaced by an off-the-shelf denoising operator~\citep{venkatakrishnan2013plug}. ADMM is a natural backbone for this formulation because variable splitting separates the data-fidelity update from the prior update while using a dual variable to enforce consensus. Early PnP-ADMM work demonstrated strong empirical performance and established fixed-point convergence under bounded-denoiser conditions~\citep{chan2016plug}. Subsequent theory refined the required denoiser assumptions: \citet{ryu2019plug} proved convergence of PnP-FBS and PnP-ADMM for properly trained Lipschitz denoisers; \citet{hurault2022proximal} constructed proximal denoisers that correspond to explicit nonconvex regularizers; and \citet{park2023convergence} related the empirical stability of PnP-ADMM with expansive CNN denoisers to the MMSE interpretation of denoising. These works establish that ADMM can be a principled interface between measurement models and learned priors, but they mainly concern discriminative denoisers rather than diffusion models trained over a continuum of noise levels.

\paragraph{Robustness, adaptation, and stochastic variants.}
Several works study how PnP-ADMM behaves when the denoiser, data distribution, or computational budget departs from the ideal assumptions. \citet{shoushtari2024prior} analyze prior mismatch in PnP-ADMM and show that domain adaptation can reduce the resulting performance gap under nonconvex data fidelity and expansive denoisers. In MRI reconstruction, \citet{hou2022truncated} propose a truncated-residual PnP-ADMM algorithm that preserves fixed-point convergence while using a neural denoiser with continuous noise conditioning. Stochastic variants pursue complementary goals: \citet{tang2020fast} introduce a stochastic PnP-ADMM scheme with inexact ADMM inner loops to reduce repeated denoiser evaluations, while \citet{coeurdoux2024plug} develop a split Gibbs sampler inspired by HQS and ADMM to obtain Bayesian posterior samples with deep generative priors. These methods highlight the flexibility of ADMM-style splitting, but they do not address the specific statistical mismatch that arises when the ADMM dual-shifted iterate is directly passed to a diffusion denoiser trained under an AWGN corruption model.

\paragraph{Diffusion priors inside PnP and ADMM-like solvers.}
Diffusion models have recently become powerful priors for inverse problems, either by modifying the reverse diffusion dynamics~\citep{chung2022diffusion, song2021solving, wang2022zero, zhang2025improving} or by using diffusion denoisers in PnP-style iterations~\citep{zhu2023denoising, zheng2025inversebench}. Very recent and largely concurrent works have begun to revisit the connection between diffusion priors, variable splitting, and ADMM-like optimization. \citet{park2026stochastic} provide a score-based interpretation of PnP and propose stochastic generative PnP, where injected noise allows pretrained score-based diffusion models to act as stronger PnP priors; however, their formulation does not explicitly study the ADMM dual variable or the out-of-distribution (OOD) denoiser input induced by dual-shifted iterates. \citet{shrestha2026taming} are particularly related to our motivation: they also identify a mismatch between score-model training distributions and ADMM iterates, and propose an AC-DC denoiser with auto-correction, directional correction, and score-based denoising. Their solution relies on conditional Langevin dynamics inside the denoising interface, which can require multiple inner sampling steps and therefore increases computational cost. By contrast, our Spectral Homogenization module performs a lightweight frequency-domain correction and does not increase the number of denoiser evaluations. \citet{zhang2025decoupling} use ADMM to decouple unconditional diffusion generation and differentiable guidance functions for training-free conditional generation, while \citet{kim2025dual} formulate inverse-problem MAP estimation with diffusion priors through dual ascent. In a related but non-ADMM direction, \citet{bendel2025solving} propose iterative colored re-noising to keep the pretrained diffusion model aligned with the AWGN input distribution.

\paragraph{Position of this work.}
Our work is closest in spirit to this very recent group of methods that use ADMM, dual ascent, or noise re-injection to strengthen the interaction between diffusion priors and measurement constraints. The key distinction is that we explicitly instantiate the ADMM dual variable inside a PnP diffusion reconstruction loop and study the resulting interface problem in medical imaging. This design restores the accumulated consensus feedback that is absent from memoryless HQS or proximal-gradient PnP diffusion solvers, while preserving compatibility with pretrained diffusion priors and efficient physics-based data-consistency updates. At the same time, directly injecting the dual variable creates spectrally structured residuals that are not AWGN-like. The proposed Spectral Homogenization module is therefore complementary to the dual-coupled update: it adapts the ADMM-induced residual distribution to the denoiser's training manifold without retraining the diffusion model, introducing Langevin inner loops, or increasing the number of denoiser evaluations.

\section{Background}
\label{sec:background}

\subsection{Inverse Problems and Variational Formulation}
We consider the generalized inverse problem of recovering an image $\mathbf{x} \in \mathbb{R}^n$ from ill-posed measurements $\mathbf{y} \in \mathbb{R}^m$, governed by the linear model $\mathbf{y} = \mathbf{A}\mathbf{x} + \mathbf{n}$. A standard resolution strategy lies in the variational framework, which estimates $\mathbf{x}$ by minimizing a composite objective:
\begin{equation}
    \min_{\mathbf{x}} \quad f(\mathbf{x}; \mathbf{y}) + \lambda \mathcal{R}(\mathbf{x}),
    \label{eq:variational_obj}
\end{equation}
where $f(\mathbf{x}; \mathbf{y}) = \frac{1}{2}\|\mathbf{A}\mathbf{x} - \mathbf{y}\|_\mathbf{W}^2$ enforces data consistency (weighted by noise precision $\mathbf{W}$), and $\mathcal{R}(\mathbf{x})$ is a regularization term encoding prior knowledge of the prior manifold.

\subsection{ADMM and the Dual Variable Mechanism}
To solve \eqref{eq:variational_obj}, the ADMM introduces an auxiliary variable $\mathbf{z}$ to decouple the fidelity and prior terms:
\begin{equation}
    \min_{\mathbf{x}, \mathbf{z}} \quad f(\mathbf{x}) + \lambda \mathcal{R}(\mathbf{z}) \quad \text{s.t.} \quad \mathbf{x} = \mathbf{z}.
\end{equation}
By forming the augmented Lagrangian, ADMM decomposes the problem into three iterative steps using a scaled dual variable $\mathbf{u}$:
\begin{align}
    \mathbf{x}^{(k+1)} & = \arg\min_{\mathbf{x}} f(\mathbf{x}) + \frac{\rho}{2}\|\mathbf{x} - \mathbf{z}^{(k)} + \mathbf{u}^{(k)}\|_2^2, \label{eq:admm_x} \\
    \mathbf{z}^{(k+1)} & = \text{prox}_{\frac{\lambda}{\rho}\mathcal{R}}\left(\mathbf{x}^{(k+1)} + \mathbf{u}^{(k)}\right), \label{eq:admm_z}              \\
    \mathbf{u}^{(k+1)} & = \mathbf{u}^{(k)} + \left(\mathbf{x}^{(k+1)} - \mathbf{z}^{(k+1)}\right). \label{eq:admm_u}
\end{align}

Here, $\rho > 0$ is a penalty parameter. Crucially, Eq.~\eqref{eq:admm_u} reveals that $\mathbf{u}^{(k)}$ acts as an integrator of the primal residual $(\mathbf{x} - \mathbf{z})$.
In control theory terms, this provides \textit{integral action}: it accumulates historical consensus errors to drive the system toward a theoretically rigorous fixed point where $\mathbf{x}^* = \mathbf{z}^*$ and measurement constraints are strictly satisfied, eliminating steady-state bias.

\subsection{Plug-and-Play Diffusion and the Statistical Gap}
The core insight of plug-and-play (PnP) priors \citep{venkatakrishnan2013plug,chan2016plug,kamilov2023plug} is that the proximal operator in Eq.\eqref{eq:admm_z}, formally defined as $\text{prox}_{\tau \mathcal{R}}(\mathbf{v}) = \arg\min_{\mathbf{z}} \frac{1}{2}\|\mathbf{z} - \mathbf{v}\|_2^2 + \tau \mathcal{R}(\mathbf{z})$, is mathematically equivalent to a maximum a posteriori (MAP) denoiser for AWGN. Consequently, PnP priors replace the explicit proximal operator in Eq.~\eqref{eq:admm_z} with an off-the-shelf denoiser $\mathcal{D}_{\sigma}$.
Modern PnP frameworks leverage Diffusion Models, where the denoiser $\mathcal{D}_{\sigma}(\cdot)$ is formally linked to the score function of the data distribution $p_{data}(\mathbf{x})$ via Tweedie's formula \cite{robbins1992empirical, song2020score}:
\begin{equation}
    \mathcal{D}_{\sigma}(\mathbf{v}) \approx \mathbf{v} + \sigma^2 \nabla_{\mathbf{v}} \log p_\sigma(\mathbf{v}) = \mathbb{E}[\mathbf{x}|\mathbf{v}].
    \label{eq:tweedie}
\end{equation}
This substitution implies that the PnP iteration effectively performs Gradient Descent on the data fidelity term and Score Ascent on the prior.

\paragraph{The Memoryless Approximation.} Most existing PnP diffusion solvers \citep{chung2022diffusion, zhu2023denoising,wu2024principled,wang2024dmplug,xu2024provably,chu2025split} typically adopt schemes akin to HQS, which effectively set $u^{(k)} \equiv 0$. While this reduces computational complexity, it discards the integral action of the dual variable, reducing the solver to a memoryless operator susceptible to asymptotic bias (as discussed in Sec.~\ref{sec:intro}).

\paragraph{The Distributional Conflict.} Reintroducing $\mathbf{u}^{(k)}$ is non-trivial due to the statistical constraints of the denoiser. Pretrained diffusion denoisers $\mathcal{D}_\sigma$ rely on a strict AWGN Assumption: they provide a valid MAP estimate \textit{if and only if} the input follows:
\begin{equation}
    \mathbf{v} = \mathbf{x}_\text{clean} + \mathbf{n}, \quad \text{where} \quad \mathbf{n} \sim \mathcal{N}(\mathbf{0}, \sigma^2 \mathbf{I}).
\end{equation}
However, the ADMM dual-shifted input $\mathbf{v}^{(k+1)} = \mathbf{x}^{(k+1)} + \mathbf{u}^{(k)}$ inherently contains structured, spatially correlated residuals (accumulated in $\mathbf{u}$). This spectral coloring violates the AWGN assumption, pushing the input OOD and causing the DMs to hallucinate artifacts as features.

\section{Method: Dual-Coupled PnP Diffusion}
\label{sec:method}

We present \textbf{Dual-Coupled PnP Diffusion (DC-PnPDP)}, a framework designed to reconcile the rigorous convergence requirements of variational optimization with the statistical rigidity of pretrained generative priors.
Our core insight is to decouple the \textit{geometric} and \textit{statistical} roles of the solver:
(1) We explicitly restore the ADMM dual variable to enforce asymptotic consensus (eliminating bias);
(2) To prevent the resulting structured dual residuals from disrupting the denoiser, we introduce \textbf{Spectral Homogenization (SH)}, a frequency-domain adaptation module. SH transforms the colored optimization artifacts into statistically compliant pseudo-AWGN, ensuring the diffusion prior operates strictly within its valid training manifold.

\subsection{The Dual-Coupled Iteration Scheme}
\label{sec:dual_coupling}
We adopt the variable splitting formulation from Eq.~\eqref{eq:admm_x}-\eqref{eq:admm_u}. Unlike memoryless baselines that discard the dual history, we strictly maintain the dual state $\mathbf{u}^{(k)}$ to track cumulative constraint violations. The overview iteration process, detailed in Algorithm \ref{alg:main}, proceeds in two coupled phases:

\paragraph{(1). Physics-Driven Update (Alg. \ref{alg:main}, Step 1).}
First, we solve the data-consistency sub-problem. Since $f(\mathbf{x})$ is quadratic, this admits a closed-form solution or can be efficiently solved via Conjugate Gradient (CG):
\begin{equation}
    \mathbf{x}^{(k+1)} = \arg\min_{\mathbf{x}} \|\mathbf{A}\mathbf{x} - \mathbf{y}\|_2^2 + \rho\|\mathbf{x} - (\mathbf{z}^{(k)} - \mathbf{u}^{(k)})\|_2^2.
\end{equation}
\paragraph{(2). The Dual-Shifted Interface (Alg. \ref{alg:main}, Steps 2--4).}
To update the prior estimate $\mathbf{z}$, standard ADMM would compute the proximal operator on the dual-shifted variable $\mathbf{v}^{(k+1)} \coloneqq \mathbf{x}^{(k+1)} + \mathbf{u}^{(k)}$.
As analyzed in Sec.~\ref{sec:background}, $\mathbf{v}^{(k+1)}$ is contaminated by spatially correlated, spectrally colored artifacts accumulated in $\mathbf{u}^{(k)}$ (e.g., aliasing patterns). Direct application of the denoiser $\mathcal{D}_\sigma(\mathbf{v}^{(k+1)})$ violates the AWGN assumption, leading to artifact hallucination.
Therefore, we insert our distribution-correction module $\mathcal{T}_{\text{SH}}$ before the denoiser:
\begin{equation}
    \mathbf{z}^{(k+1)} = \mathcal{D}_\sigma\left( \mathcal{T}_{\text{SH}}(\mathbf{v}^{(k+1)}; \mathbf{z}^{(k)}) \right).
\end{equation}
Finally, the dual variable is updated via $\mathbf{u}^{(k+1)} = \mathbf{u}^{(k)} + (\mathbf{x}^{(k+1)} - \mathbf{z}^{(k+1)})$, closing the feedback loop.
Crucially, the stability of this rigorous feedback mechanism relies entirely on the denoiser accepting the dual-shifted input without exhibiting OOD behavior. In the following section, we detail how Spectral Homogenization $\mathcal{T}_{\text{SH}}$ bridges this statistical gap by modulating the input's frequency characteristics.

\subsection{Spectral Homogenization (SH)}
\label{sec:sh}

The goal of Spectral Homogenization is to construct a rectified input $\tilde{\mathbf{v}}$ whose effective noise distribution matches the isotropic Gaussian statistics $\mathcal{N}(\mathbf{0}, \sigma_t^2 \mathbf{I})$ expected by the diffusion model, without destroying the low-frequency semantic content of $\mathbf{x}$.

\paragraph{Motivation: Why Frequency Domain Modulation?}
The rationale for intervening in the frequency domain, rather than the spatial domain, stems from the need to balance \textit{statistical whitening} with \textit{structure preservation}.
Physical artifacts (e.g., undersampling streaks or aliasing) are inherently \textit{spectrally colored}, manifesting as energy concentrations in specific frequency bands, whereas the diffusion model expects a flat (white) spectrum. A naive spatial noise injection would degrade the entire image uniformly, destroying valid structural information.
By contrast, operating in the frequency domain allows us to strictly enforce the AWGN energy profile by filling only the ``spectral valleys'' (missing energy) with synthetic noise, while preserving the ``spectral peaks'' (dominant energy) that correspond to the valid semantic guidance from the geometric update $\mathbf{v}^{(k+1)}$.

\paragraph{Procedural Overview.}
To realize this frequency-selective adaptation, the SH module $\mathcal{T}_{\text{SH}}$ executes a three-stage process:
(1) \textbf{Diagnosis}: We first estimate the Power Spectral Density (PSD) of the current optimization residuals to identify spectral deficits;
(2) \textbf{Synthesis}: We construct a complementary noise field that is spectrally orthogonal to the artifacts yet statistically consistent with the target noise level;
(3) \textbf{Fusion}: The synthetic noise is seamlessly merged with the dual-shifted input to produce a homogenized state $\tilde{\mathbf{v}}^{(k+1)}$.
We detail these steps below.

\paragraph{Step 1: Spectral Density Estimation of the Residual.}
Since the ground truth noise realization within $\mathbf{v}^{(k+1)}$ is inaccessible, we employ a \textit{bootstrap} strategy to estimate the spectral deficit of the current iterate.
We define the empirical residual field $\mathbf{r}^{(k+1)}$ using the posterior mean estimate from the previous iteration, $\mathbf{z}^{(k)}$, as a proxy for the clean signal:
\begin{equation}
    \mathbf{r}^{(k+1)} := \mathbf{v}^{(k+1)} - \mathbf{z}^{(k)}.
\end{equation}
Intuitively, $\mathbf{r}^{(k+1)}$ captures the structured deviations caused by the physics-based update and the dual accumulation.

Then we analyze the spectral characteristics of $\mathbf{r}$ via its PSD, denoted as $S_{\mathbf{r}}(\boldsymbol{\omega})$. To mitigate variance, we formulate the PSD estimation as a non-parametric estimation problem. We compute the smoothed spectral estimate $\hat{S}_{\mathbf{r}}$ via kernel density estimation in the frequency domain:
\begin{equation}
    \hat{S}_{\mathbf{r}}(\boldsymbol{\omega}) := \left( |\mathcal{F}(\mathbf{r})(\boldsymbol{\omega})|^2 \right) * \mathcal{K}_\delta,
    \label{eq:psd_est}
\end{equation}
where $\mathcal{F}$ is the discrete Fourier transform, $\mathcal{K}_\delta$ is a localized smoothing kernel with bandwidth $\delta$. This operation extracts the \textit{spectral signature} of the solver-induced artifacts, identifying frequency bands where the dual variable accumulates excessive structured energy.

\paragraph{Step 2: Complementary Noise Injection.}
Our objective is to construct a rectified input $\tilde{\mathbf{v}}$ whose effective noise spectrum is theoretically flat at the noise level $\sigma_t^2$. We define the \textit{spectral deficit} $\Delta S(\boldsymbol{\omega})$ as the gap between the target white spectrum and the current residual spectrum:
\begin{equation}
    \Delta S(\boldsymbol{\omega}) := \max \left( \epsilon, \, \sigma_t^2 (HW) - \hat{S}_{\mathbf{r}}(\boldsymbol{\omega}) \right),
\end{equation}
where $HW$ is the number of pixels under the unnormalized FFT convention, and $\epsilon$ is a small floor that prevents zero-power frequencies and stabilizes the square-root operation.

To fill this deficit, we introduce a \textit{complementary stochastic process} $\boldsymbol{\xi}$. Unlike standard additive noise, $\boldsymbol{\xi}$ is constructed to be spectrally orthogonal to the dominant artifact modes. We synthesize $\boldsymbol{\xi}$ by modulating a standard white Gaussian process $\mathbf{n} \sim \mathcal{N}(\mathbf{0}, \mathbf{I})$:
\begin{equation}
    \boldsymbol{\xi}^{(k+1)} := \mathcal{F}^{-1} \left( \sqrt{\Delta S(\boldsymbol{\omega})} \odot e^{i \angle \mathcal{F}(\mathbf{n})} \right).
    \label{eq:injection}
\end{equation}
Here, the phase $\angle \mathcal{F}(\mathbf{n})$ is taken from the random noise $\mathbf{n}$, ensuring spatial incoherence with the signal. While the amplitude is shaped by $\Delta S(\boldsymbol{\omega})$.

\paragraph{Step 3: Fusion.}
The final spectrally homogenized input to the denoiser is then formed as:
\begin{equation}
    \tilde{\mathbf{v}}^{(k+1)} := \mathbf{v}^{(k+1)} + \boldsymbol{\xi}^{(k+1)}.
\end{equation}
By construction, the effective noise in $\tilde{\mathbf{v}}^{(k+1)}$ exhibits an approximately flat spectrum, mimicking the AWGN conditions $\mathcal{D}_\sigma$ was trained on.

\paragraph{Theoretical Justification: Moment Matching}
The proposed $\mathcal{T}_{\text{SH}}$ guarantees that the input to the diffusion model is statistically well-posed in terms of second-order moments. We formalize this via the following property.

\begin{proposition}[Second-Order Spectral Consistency]
\label{prop:consistency}
Assume the injected phase is uniformly distributed and independent of $\mathbf{r}$. The expected Power Spectral Density of the homogenized effective noise $\mathbf{n}_\text{eff} = \mathbf{r} + \boldsymbol{\xi}$ satisfies the whitening condition:
\begin{equation}
    \mathbb{E}_{\boldsymbol{\xi}} \left[ S_{\mathbf{n}_\text{eff}}(\boldsymbol{\omega}) \right] \approx \sigma^2 (HW), \quad \forall \boldsymbol{\omega}.
\end{equation}
Consequently, the covariance matrix of the effective perturbation approximates a scaled identity matrix, $\text{Cov}(\mathbf{n}_\text{eff}) \approx \sigma^2 \mathbf{I}$, aligning with the AWGN assumption of $\mathcal{D}_\sigma$.
\end{proposition}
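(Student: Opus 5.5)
We work in the unnormalized DFT domain and expand the periodogram of $\mathbf{n}_\text{eff}=\mathbf{r}+\boldsymbol{\xi}$ frequency by frequency. Write $R(\boldsymbol{\omega})=\mathcal{F}(\mathbf{r})(\boldsymbol{\omega})$ and $\Xi(\boldsymbol{\omega})=\mathcal{F}(\boldsymbol{\xi})(\boldsymbol{\omega})$. By Eq.~\eqref{eq:injection} and invertibility of $\mathcal{F}$, we have $\Xi(\boldsymbol{\omega})=\sqrt{\Delta S(\boldsymbol{\omega})}\,e^{i\phi(\boldsymbol{\omega})}$ with $\phi=\angle\mathcal{F}(\mathbf{n})$. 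This gives
\begin{equation*}
|R+\Xi|^2 = |R|^2 + \Delta S + 2\,\mathrm{Re}\bigl(\overline{R}\,\sqrt{\Delta S}\,e^{i\phi}\bigr).
\end{equation*}
Here $\Delta S$ is treated as deterministic given $\mathbf{r}$, since it depends only on $\hat{S}_{\mathbf{r}}$.

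\textbf{Step 1: the cross term vanishes.} We take $\mathbb{E}_{\boldsymbol{\xi}}$ conditionally on $\mathbf{r}$. The phase $\phi(\boldsymbol{\omega})$ is uniform on $[0,2\pi)$ and independent of $\mathbf{r}$, so $\mathbb{E}[e^{i\phi}]=0$ and the cross term is zero. The only caveat is the self-conjugate frequencies (DC and Nyquist), where the phase of a real Gaussian's DFT takes values in $\{0,\pi\}$. There it is still symmetric, so the mean is again zero. We therefore get $\mathbb{E}_{\boldsymbol{\xi}}[|N_\text{eff}(\boldsymbol{\omega})|^2]=|R(\boldsymbol{\omega})|^2+\Delta S(\boldsymbol{\omega})$.

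\textbf{Step 2: the deficit fills the gap.} On frequencies where $\sigma^2 HW \ge \hat{S}_{\mathbf{r}}+\epsilon$, we have $\Delta S=\sigma^2 HW-\hat{S}_{\mathbf{r}}$, so the expected PSD equals $\sigma^2 HW + (|R|^2-\hat{S}_{\mathbf{r}})$. The remaining term is the gap between the raw periodogram and its kernel-smoothed version from Eq.~\eqref{eq:psd_est}. Its size is what the ``$\approx$'' absorbs.

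We then interpret the PSD as the expected or smoothed spectrum, which is the quantity the estimator targets. Smoothing both sides with $\mathcal{K}_\delta$ (normalized to unit mass) makes the identity exact up to $\epsilon$ and $O(\delta)$ spectral variation. In other words, $(|R|^2-\hat{S}_{\mathbf{r}})*\mathcal{K}_\delta\approx 0$ whenever $|R|^2$ is locally smooth at scale $\delta$.

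\textbf{Step 3: the exceptions.} The approximation can fail in two places. The first is the $\epsilon$ floor, which adds at most $\epsilon\ll\sigma^2 HW$. The second is frequencies where $\hat{S}_{\mathbf{r}}>\sigma^2 HW$; there the excess over the target is $\hat{S}_{\mathbf{r}}-\sigma^2 HW$. For the latter, we state the assumption explicitly: the residual is sub-white, $\hat{S}_{\mathbf{r}}\le\sigma^2 HW$, which is the regime the noise schedule is chosen to enforce. Under this assumption the whitening holds for all $\boldsymbol{\omega}$.

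\textbf{Step 4: the covariance claim.} We use the standard fact that for a stationary (circulant-covariance) field, the covariance is diagonalized by the DFT with eigenvalues $S(\boldsymbol{\omega})/HW$. A flat expected PSD $\sigma^2 HW$ then gives $\mathrm{Cov}(\mathbf{n}_\text{eff})\approx\frac{1}{HW}\mathcal{F}^{*}(\sigma^2 HW\,\mathbf{I})\mathcal{F}/HW\cdot HW=\sigma^2\mathbf{I}$, with unitary normalization handled carefully.

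\textbf{Main obstacle.} The hard step is making the passage from a single-realization periodogram $|R|^2$ to $\hat{S}_{\mathbf{r}}$ honest. One must either treat $\mathbf{r}$ as locally stationary, so that smoothing is a consistent estimator, or define $S_{\mathbf{n}_\text{eff}}$ itself as the $\mathcal{K}_\delta$-smoothed periodogram. I would adopt the latter definition so that the statement is an exact identity up to the $\epsilon$ and clipping terms.
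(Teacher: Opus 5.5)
The paper states this proposition without proof; beyond the remark that the fused input is flat ``by construction'', there is no argument to compare against. Your Steps 1--3 are the natural formalization of that heuristic, and the conditional computation $\mathbb{E}_{\boldsymbol{\xi}}[|R+\Xi|^2]=|R|^2+\Delta S$ is correct.

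\textbf{The genuine gap is Step 4.} Because you condition on $\mathbf{r}$, the vector $\mathbf{r}$ is the conditional \emph{mean} of $\mathbf{n}_\text{eff}$. Hence $\mathrm{Cov}_{\boldsymbol{\xi}}(\mathbf{n}_\text{eff}\mid\mathbf{r})=\mathrm{Cov}(\boldsymbol{\xi}\mid\mathbf{r})$, whose Fourier diagonal is $\Delta S$. That is far from flat exactly where $\mathbf{r}$ carries energy. The flat quantity $|R|^2+\Delta S$ is instead the Fourier diagonal of the second-moment matrix $\mathbf{r}\mathbf{r}^\top+\mathrm{Cov}(\boldsymbol{\xi}\mid\mathbf{r})$. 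Since $\mathbf{r}\mathbf{r}^\top$ is rank one and not circulant, the fact that a stationary field's covariance is diagonalized by the DFT has nothing to act on. A flat Fourier diagonal does not by itself make a matrix equal to $\sigma^2\mathbf{I}$.

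To close the step you must do one of two things.
\begin{itemize}
\item[(a)] Average over $\mathbf{r}$ as well, modelling it as a zero-mean, circularly stationary random field with spectrum $S_{\mathbf{r}}$. You then have to check that all off-diagonal Fourier moments vanish. For $\boldsymbol{\xi}$ this requires the phases at distinct non-conjugate frequencies to be independent. That holds for the DFT of white Gaussian $\mathbf{n}$, but it is more than the per-frequency uniformity you invoked. The resulting spectrum is $S_{\mathbf{r}}+\mathbb{E}[\Delta S]=\sigma^2HW+S_{\mathbf{r}}-S_{\mathbf{r}}*\mathcal{K}_\delta$, and only if $\hat{S}_{\mathbf{r}}\le\sigma^2HW-\epsilon$ holds almost surely. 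Otherwise convexity of the $\max$ biases $\mathbb{E}[\Delta S]$ upward.
\item[(b)] Weaken the conclusion to the expected \emph{circular sample autocovariance} of $\mathbf{n}_\text{eff}$. By Wiener--Khinchin this is the inverse DFT of the expected periodogram, so it follows at once from Step 2. It is, however, a spatially averaged statement, not a pixelwise covariance $\sigma^2\mathbf{I}$.
\end{itemize}

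There are also two smaller corrections.
\begin{itemize}
\item Your closing claim is false: defining $S_{\mathbf{n}_\text{eff}}$ as the $\mathcal{K}_\delta$-smoothed periodogram does not make the identity exact up to $\epsilon$ and clipping. You get $\hat{S}_{\mathbf{r}}+\Delta S*\mathcal{K}_\delta=\sigma^2HW+(\hat{S}_{\mathbf{r}}-\hat{S}_{\mathbf{r}}*\mathcal{K}_\delta)$. The bracket vanishes identically only if $\mathcal{K}_\delta*\mathcal{K}_\delta=\mathcal{K}_\delta$, which fails for any localized smoothing kernel such as the Gaussian window in the algorithm. So the smoothness error survives under either definition.
\item The sub-whiteness assumption $\hat{S}_{\mathbf{r}}\le\sigma^2HW$ is a real restriction, not something the schedule is shown to enforce. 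The paper explicitly designs SH to preserve the ``spectral peaks'' where the dual residual carries excess energy. On exactly those bands $\Delta S=\epsilon$, so the expected spectrum stays near $\hat{S}_{\mathbf{r}}>\sigma^2HW$ and the ``$\forall\boldsymbol{\omega}$'' claim fails. State the result on the set $\{\boldsymbol{\omega}:\hat{S}_{\mathbf{r}}(\boldsymbol{\omega})\le\sigma^2HW-\epsilon\}$ rather than attributing the assumption to the noise schedule.
\end{itemize}
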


\begin{remark}
This mechanism can be interpreted as \textbf{Coherence Breaking}. Physical artifacts are characterized by high spectral coherence (energy concentration) and phase correlation. By injecting the complementary process $\boldsymbol{\xi}$, we "drown out" the structured artifacts in specific frequency bands, effectively masking them as part of a global AWGN realization. This prevents the diffusion model from hallucinating structures based on artifact patterns.
\end{remark}

\begin{algorithm}[h]
    \caption{Dual-Coupled PnP Diffusion Prior with Spectral Homogenization}
    \label{alg:main}
    \begin{algorithmic}[1]
        \STATE {\bfseries Input:} Measurements $\mathbf{y}$, Forward Operator $\mathbf{A}$, Pre-trained Diffusion Denoiser $\mathcal{D}_\theta$, Total Iterations $K$, Noise Schedule $\{{\sigma}_t\}_{t=1}^{T}$.
        \STATE {\bfseries Initialize:} $\mathbf{x}^{(0)} \leftarrow \mathbf{A}^\top \mathbf{y}$, $\mathbf{z}^{(0)} \sim \mathcal{N}(\mathbf{0}, \mathbf{I})$, $\mathbf{u}^{(0)} \leftarrow \mathbf{0}$.

        \FOR{$k = 0$ {\bfseries to} $K-1$}
        \STATE \textcolor{gray}{// Set diffusion noise schedule}
        \STATE $t \leftarrow t_k$ (e.g., linear spacing from $T$ to $1$)

        \STATE \;\textcolor[RGB]{78, 95, 170}{\(\triangleright\) \textit{1. Data Fidelity Optimization}}
        \STATE $\mathbf{x}^{(k+1)} \leftarrow \underset{\mathbf{x}}{\arg\min} \|\mathbf{A}\mathbf{x} - \mathbf{y}\|_2^2 + {\rho}\|\mathbf{x} - \mathbf{z}^{(k)} + \mathbf{u}^{(k)}\|_2^2$

        \STATE \;\textcolor[RGB]{237, 125, 49}{\(\triangleright\) \textit{2. Spectral Homogenization}}
        \STATE $\mathbf{v}^{(k+1)} \leftarrow \mathbf{x}^{(k+1)} + \mathbf{u}^{(k)}$
        \STATE $\tilde{\mathbf{v}}^{(k+1)} \leftarrow \text{\textbf{S}pectral\textbf{H}omogenize}(\mathbf{v}^{(k+1)}, \mathbf{z}^{(k)}; \sigma_t)$

        \STATE \;\textcolor[RGB]{204, 68, 62}{\(\triangleright\) \textit{3. Denoising Prediction}}
        \STATE $\mathbf{z}^{(k+1)} \leftarrow \mathcal{D}_\theta(\tilde{\mathbf{v}}^{(k+1)}, t)$

        \STATE \;\textcolor[RGB]{89,179,80}{\(\triangleright\) \textit{4. Dual Variable Update}}
        \STATE $\mathbf{u}^{(k+1)} \leftarrow \mathbf{u}^{(k)} + (\mathbf{x}^{(k+1)} - \mathbf{z}^{(k+1)})$
        \ENDFOR
        \STATE {\bfseries Return} $\mathbf{z}^{(K)}$
    \end{algorithmic}
\end{algorithm}

\section{Experiments}
In this section, we validate the effectiveness of the proposed DC-PnPDP framework.
Our experiments aim to answer two key questions:
(1) Does the introduction of the dual variable mitigate steady-state bias and improve convergence under challenging ill-posed problems?
(2) Does Spectral Homogenization successfully suppress the colored artifacts inherent to dual-driven updates, preventing denoiser hallucination?

We evaluate our method on three representative medical imaging inverse problems: (i) Sparse-View (SV) CT Reconstruction, (ii) Limited-Angle (LA) CT Reconstruction, and (iii) Accelerated MRI Reconstruction.
Furthermore, we conduct comprehensive ablation studies to isolate the contributions of the Dual-Variable Coupling and Spectral Homogenization modules.

\subsection{Experimental Settings}

\paragraph{CT Dataset \& Pre-processing.}
We utilize the AbdomenCT-1K dataset~\citep{Ma2021AbdomenCT1K}, which contains 1112 CT scans from 12 medical centers.
We employ 104,534 slices from 219 patients for training the diffusion prior, reserving six distinct patients (2,894 slices) for evaluation.
All slices are resized to 256$\times$256.
We use the \texttt{torch-radon} library to simulate parallel-beam (PB) geometry.
For {Sparse-View CT}, we uniformly sample 20 views from $[0, 180)^\circ$; for {Limited-Angle CT}, we utilize a restricted angular range of $[0, 90]^\circ$ with 90 views.
\textit{Detailed settings are provided in Appendix \ref{app:exp_settings}.}

\paragraph{MRI Dataset \& Pre-processing.} We conduct experiments on the fastMRI brain dataset~\citep{zbontar2018fastMRI}.
We evaluate on a 200-slice test subset following the same single-coil protocol used by the competing diffusion solvers.
The complex-valued images are cropped to $320 \times 320$.
Crucially, we simulate single-coil acquisition to rigorously evaluate the method's performance under extreme undersampling conditions where parallel imaging gains (from coil sensitivity maps) are unavailable.
To simulate accelerated acquisition, we employ 1D equidistant Cartesian masks with acceleration factors of $6\times$ and $10\times$.
We choose this equidistant sampling pattern over random subsampling because it reflects realistic clinical deployment.
More importantly, it generates highly coherent artifacts that are substantially more difficult to remove compared to the incoherent artifacts resulting from random sampling~\cite{knoll2019assessment}.
For the proposed Spectral Homogenization in the complex domain, we apply the frequency-domain modulation to the real and imaginary channels separately.

\paragraph{Implementation Details.} We adopt the EDM  framework~\citep{karras2022elucidating} for training the diffusion prior. For CT task, we train the model from scratch on AbdomenCT-1K. For MRI task, all PnPDP methods are evaluated with the same pre-trained MRI diffusion prior provided by \citet{zheng2025inversebench}.
All experiments are conducted on a single NVIDIA RTX 4090 GPU unless otherwise specified.

\paragraph{Methods in Comparison \& Metrics.} We compare DC-PnPDP against a comprehensive suite of baselines, categorized into:
(1) Classical Methods: FBP (for CT) and Zero-filling (for MRI).
(2) SOTA Diffusion Solvers: We select five representative PnPDP methods: DDNM~\citep{wang2022zero}, DDS~\citep{chung2024decomposed}, DiffPIR~\citep{zhu2023denoising}, the recent DAPS~\citep{zhang2025improving}, and SITCOM~\citep{alkhouri2025sitcom}.
It is worth highlighting that \textit{DiffPIR} can be mathematically formulated as a PnP-HQS framework, whereas our approach represents an ADMM framework. The structural difference lies strictly in the dual variable update and the proposed Spectral Homogenization. \textit{To ensure a rigorous comparison, we align the data-consistency operator of our method exactly with that of DiffPIR.} This control ensures that any performance gain is exclusively maximizing the contribution of the dual-coupling mechanism and spectral homogenization module. \textit{Detailed hyperparameter alignments are provided in Appendix \ref{app:baseline_details}.}
Quantitative evaluation employs Peak Signal-to-Noise Ratio (PSNR) and Structural Similarity (SSIM) for data fidelity. Additional LPIPS results are provided in Appendix~\ref{app:controlled_results}.

\begin{figure*}[t]
    \centering
    \includegraphics[width=\textwidth]{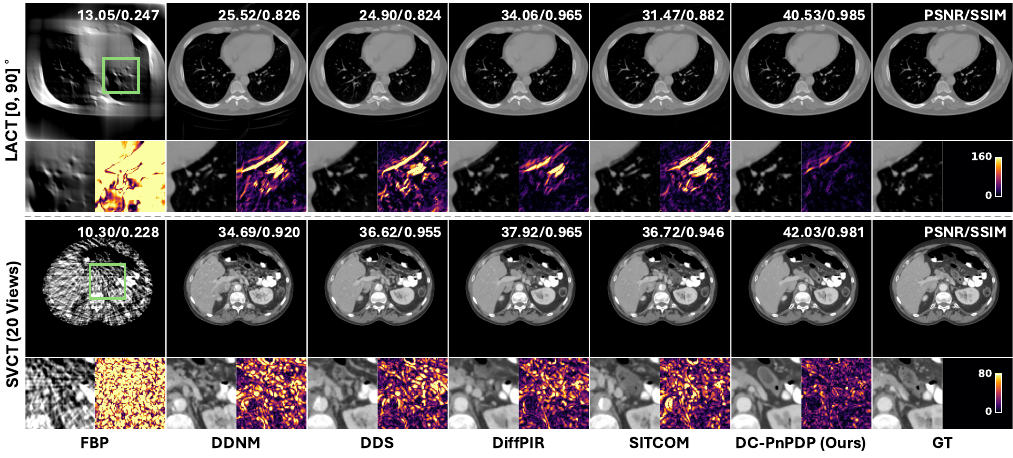}
    \caption{Qualitative comparisons on two CT reconstruction tasks. Visualization windows are set to $[-800, 800]$ HU for LACT and  $[-175, 500]$ HU for SVCT, respectively. }
    \label{fig:Res-CT}
\end{figure*}

\begin{figure*}[t]
    \centering
    \includegraphics[width=\textwidth]{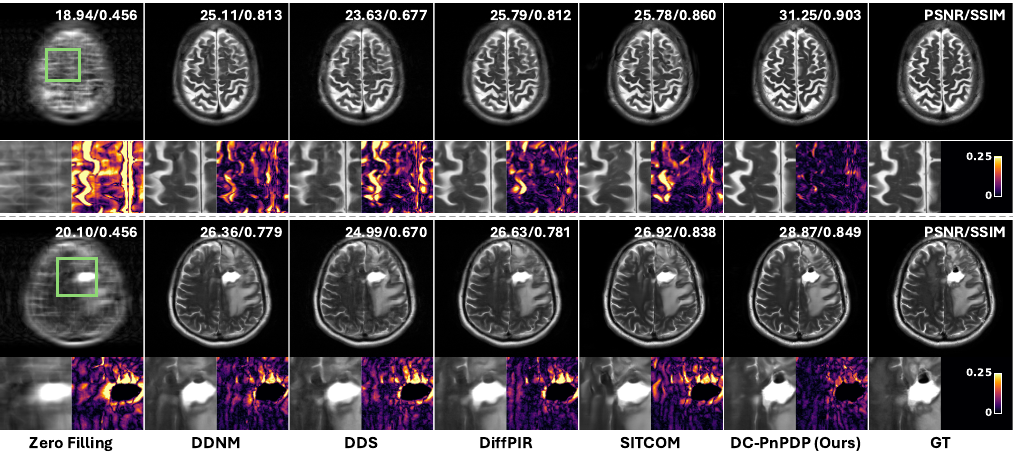}
    \caption{Qualitative comparisons of accelerated fastMRI brain reconstruction at AF = 10, including zoomed-in details and error maps.}
    \label{fig:Res-MRI}
\end{figure*}

\begin{table*}[t]
\centering
\caption{Quantitative comparison under recommended sampling budgets. The table reports PSNR / SSIM on CT and fastMRI brain reconstruction tasks. The best results are highlighted in \textbf{bold}, and the strongest baseline results are \underline{underlined}. The gray 50-NFE DC-PnPDP row is included as an auxiliary budget reference and is not used for the margin row.}
\label{tab:recommended_budget}
\small
\setlength{\tabcolsep}{3.5pt}
\begin{tabular*}{\textwidth}{@{\extracolsep{\fill}}lccccccccc}
\toprule
\multirow{2}{*}{\textbf{Method}} & \multirow{2}{*}{\textbf{NFE}} &
\multicolumn{2}{c}{\textbf{LACT-90}} &
\multicolumn{2}{c}{\textbf{SVCT-20}} &
\multicolumn{2}{c}{\textbf{Brain MRI (AF=6)}} &
\multicolumn{2}{c}{\textbf{Brain MRI (AF=10)}} \\
\cmidrule(lr){3-4}\cmidrule(lr){5-6}\cmidrule(lr){7-8}\cmidrule(lr){9-10}
 & & PSNR $\uparrow$ & SSIM $\uparrow$ & PSNR $\uparrow$ & SSIM $\uparrow$ & PSNR $\uparrow$ & SSIM $\uparrow$ & PSNR $\uparrow$ & SSIM $\uparrow$ \\
\midrule
$\mathbf{A}^{\dagger}\mathbf{y}$ & -- & 16.82 & 0.605 & 21.82 & 0.359 & 23.44 & 0.680 & 20.41 & 0.576 \\
DDNM~\citep{wang2022zero} & 100 & 27.13 & 0.800 & 33.94 & 0.885 & 33.23 & 0.953 & 26.13 & 0.889 \\
DDS~\citep{chung2024decomposed} & 100 & 26.30 & 0.806 & 36.36 & 0.932 & 33.26 & 0.953 & 26.08 & 0.889 \\
DAPS~\citep{zhang2025improving} & $5{\times}200$ & 30.02 & 0.891 & 37.05 & 0.939 & 34.89 & 0.967 & 27.04 & 0.910 \\
SITCOM~\citep{alkhouri2025sitcom} & $50{\times}10$ & 32.07 & 0.911 & 37.76 & 0.945 & \underline{35.58} & \underline{0.969} & \underline{28.67} & \underline{0.927} \\
DiffPIR~\citep{zhu2023denoising} & 100 & \underline{34.70} & \underline{0.926} & \underline{37.86} & \underline{0.947} & 34.88 & 0.965 & 27.92 & 0.918 \\
\midrule
\textcolor{gray}{DC-PnPDP (\textit{Ours})} & \textcolor{gray}{50} & \textcolor{gray}{37.53} & \textcolor{gray}{0.945} & \textcolor{gray}{39.91} & \textcolor{gray}{0.961} & \textcolor{gray}{36.05} & \textcolor{gray}{0.971} & \textcolor{gray}{30.47} & \textcolor{gray}{0.939} \\
DC-PnPDP (\textit{Ours}) & 100 & \textbf{39.46} & \textbf{0.955} & \textbf{40.55} & \textbf{0.963} & \textbf{36.43} & \textbf{0.972} & \textbf{30.91} & \textbf{0.943} \\
\bottomrule
\end{tabular*}
\end{table*}

\subsection{Main Results}

We present the quantitative comparison against SOTA PnP diffusion solvers in Table~\ref{tab:recommended_budget}.
Each baseline is evaluated under the sampling budget recommended by its original work or official implementation, and DC-PnPDP is reported at both 50 and 100 NFEs.
Across CT and fastMRI brain reconstruction tasks, the proposed \textbf{DC-PnPDP} consistently achieves the highest fidelity metrics (PSNR and SSIM).

\paragraph{SVCT and LACT.}
The advantages of our framework are most pronounced in the CT reconstruction tasks.
In the recommended-budget setting, DC-PnPDP at 100 NFEs improves over the strongest baseline by \textbf{+4.76 dB} on LACT-90 (39.46 dB vs. 34.70 dB) and \textbf{+2.69 dB} on SVCT-20 (40.55 dB vs. 37.86 dB).
This empirical evidence strongly validates our theoretical analysis: current PnPDP methods treat the data-consistency and prior sub-problems as loosely coupled steps. Functioning essentially as memoryless P-controllers, they lack the capacity to track the discrepancy between the likelihood and prior terms, often causing the optimization trajectory to stall with significant steady-state errors.
In contrast, our dual-coupled mechanism acts as an Integral controller that strictly enforces the consensus constraint ($\mathbf{x}=\mathbf{z}$), effectively eliminating the bias caused by missing measurements and enhancing geometric accuracy.

Visual inspection of Figure~\ref{fig:Res-CT} corroborates these quantitative gains.
In the LACT task (top row), baselines such as DiffPIR and DAPS struggle to recover the bone structures within the missing wedge direction, resulting in severe blurring and streak artifacts (see zoomed-in ROI). Our DC-PnPDP, however, reconstructs sharp anatomical boundaries that are visually nearly identical to the GT.
For the SVCT task (bottom row), we observe that memoryless baselines tend to \textit{over-smooth} small low-contrast structures (e.g., the small lesion highlighted in the ROI).
In contrast, our method preserves subtle tissue contrast and fine textures without hallucinating non-existent features.
The error maps further highlight that DC-PnPDP yields the lowest pixel-wise reconstruction error, demonstrating superior fidelity across the entire field of view.

\paragraph{Accelerated MRI.}
On fastMRI brain reconstruction, DC-PnPDP also improves the strongest baseline under both acceleration factors.
At AF=6, DC-PnPDP at 100 NFEs reaches 36.43 dB / 0.972 SSIM, improving over SITCOM by +0.85 dB.
At the more challenging AF=10 setting, it reaches 30.91 dB / 0.943 SSIM, improving over SITCOM by +2.24 dB.
These gains indicate that the dual-coupled mechanism is helpful when coherent undersampling artifacts dominate the residual structure.
As shown in Figure~\ref{fig:Res-MRI}, DC-PnPDP yields sharper anatomical boundaries and lower error maps under severe acceleration.

\subsection{Ablation Study}

\paragraph{Effectiveness of Dual-Coupling \& SH.}
We verify the distinct roles of the proposed modules by progressively incorporating them into the framework. The quantitative results are summarized in Table~\ref{tab:ablation_components}.
\textbf{(1)} The \textbf{Base PnP-HQS} configuration (mathematically equivalent to DiffPIR) yields the lowest fidelity (31.36 dB), confirming that memoryless controllers suffer from inherent steady-state bias.
\textbf{(2)} Applying \textbf{SH Only} yields marginal gains (+0.15 dB). This result is structurally revealing: it confirms that the specific spectral pathology (colored artifacts) is primarily induced by the dual accumulation mechanism; in the absence of the dual variable, the distribution shift is less severe, rendering spectral correction less critical.
\textbf{(3)} Conversely, reintroducing the \textbf{Dual Variable} drives the primary performance leap (+4.55 dB) by correcting the geometric bias, yet it still lags behind the full method by $\sim$1.1 dB due to the unmitigated spectral mismatch between the solver residuals and the denoiser's manifold.
\textbf{(4)} Finally, the \textbf{full DC-PnPDP} achieves optimal performance by synergizing integral action (for geometric rigor) with spectral homogenization (for statistical alignment), effectively resolving the bias-hallucination trade-off.

\begin{table}[t]
\centering
\caption{Component ablation results on LACT-90. The first row corresponds to the standard PnP-HQS baseline, i.e., DiffPIR.}
\label{tab:ablation_components}
\small
\setlength{\tabcolsep}{3pt}
\begin{tabular*}{\columnwidth}{@{\extracolsep{\fill}}ccccc}
\toprule
\multicolumn{2}{c}{\textbf{Components}} & \multicolumn{3}{c}{\textbf{Metrics}} \\
\cmidrule(lr){1-2} \cmidrule(lr){3-5}
\textbf{DC} & \textbf{SH} & \textbf{PSNR} $\uparrow$ & \textbf{SSIM} $\uparrow$ & \textbf{LPIPS} $\downarrow$ \\
\midrule
\textcolor[RGB]{214,39,40}{\ding{55}} & \textcolor[RGB]{214,39,40}{\ding{55}} & 31.36 & 0.894 & 0.023 \\
\textcolor[RGB]{214,39,40}{\ding{55}} & \textcolor[RGB]{44,160,44}{\ding{51}} & 31.51 & 0.898 & 0.022 \\
\textcolor[RGB]{44,160,44}{\ding{51}} & \textcolor[RGB]{214,39,40}{\ding{55}} & 35.91 & 0.934 & 0.012 \\
\textcolor[RGB]{44,160,44}{\ding{51}} & \textcolor[RGB]{44,160,44}{\ding{51}} & \textbf{37.02} & \textbf{0.943} & \textbf{0.011} \\
\bottomrule
\end{tabular*}
\end{table}

\begin{figure}[t]
    \centering

    \begin{subfigure}[b]{1.0\columnwidth}
        \centering
        \includegraphics[width=1\columnwidth]{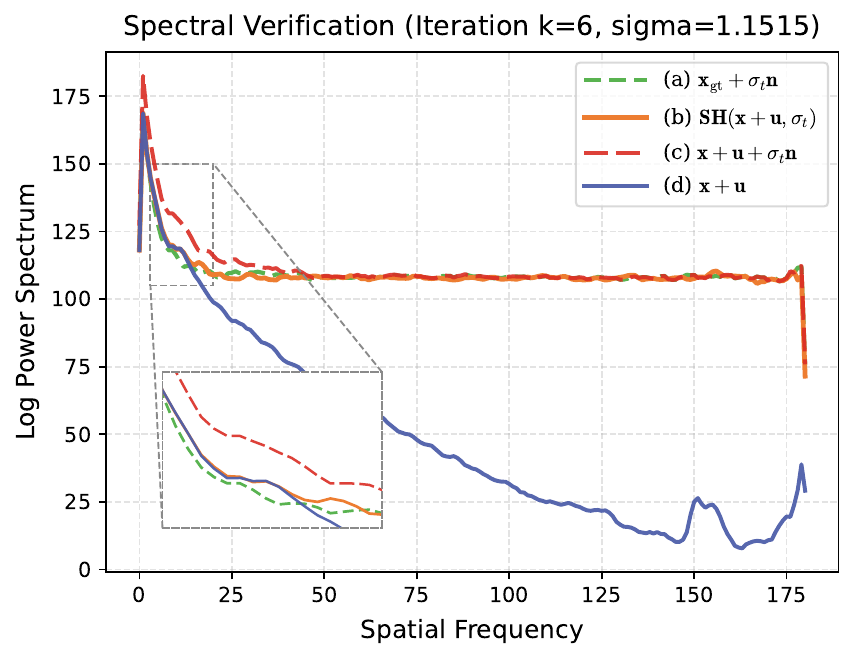}
        \caption{\textbf{Spectral Analysis (PSD):} Visualizing the ``Spectral Deficit''.}
        \label{fig:sub_psd}
    \end{subfigure}

    \vspace{0.2cm}

    \begin{subfigure}[b]{1.0\columnwidth}
        \centering
        \includegraphics[width=1\columnwidth]{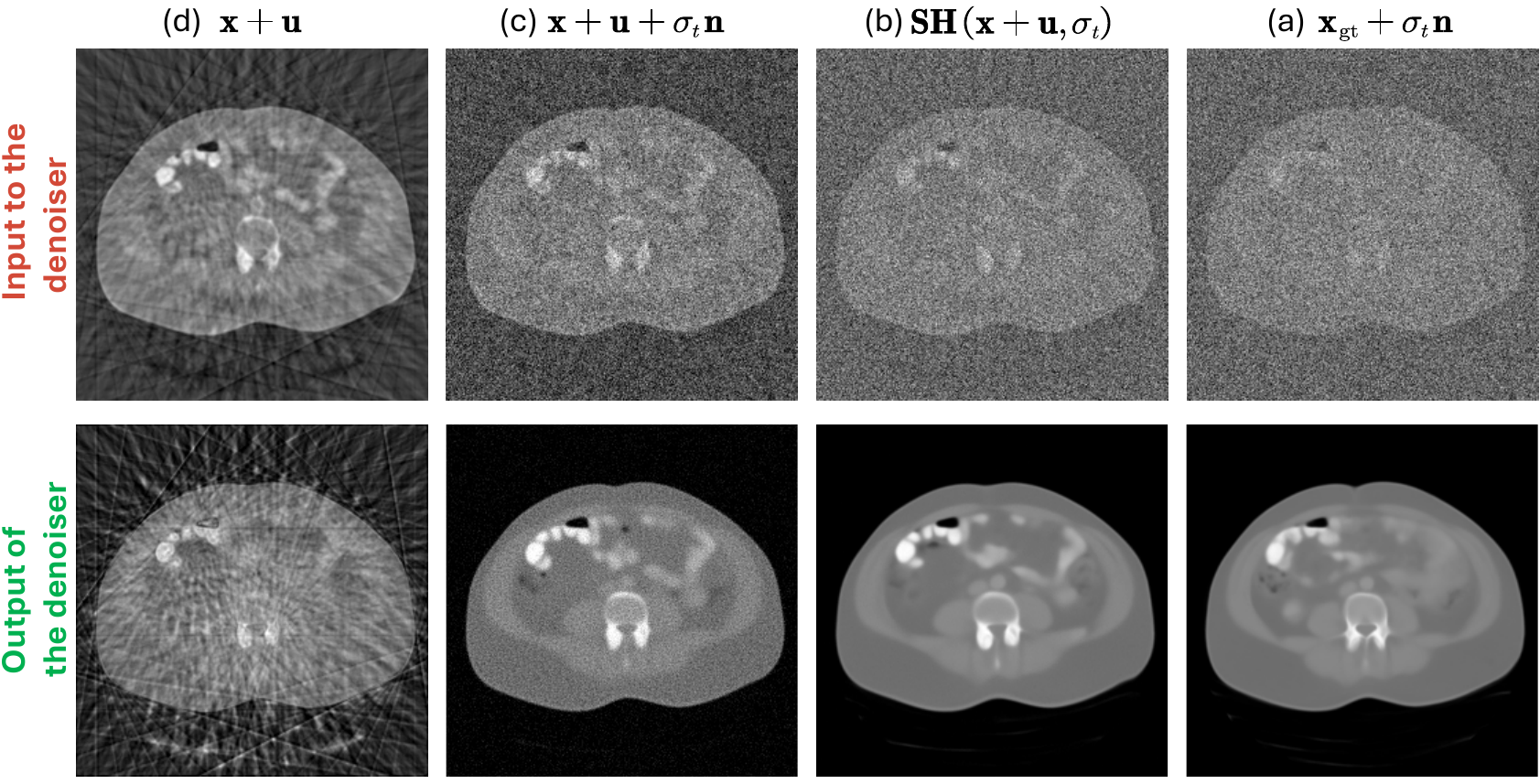}
        \caption{\textbf{Denoising Outcomes:} Comparison of denoiser inputs/outputs.}
        \label{fig:sub_visual}
    \end{subfigure}

    \caption{\textbf{Mechanism Verification: Why Spectral Homogenization Matters.}
        We analyze the spectral characteristics (\textbf{Top}) and the corresponding denoising outputs (\textbf{Bottom}) at iteration $k=6$.
        \textbf{(a) Ideal Reference (\textcolor[RGB]{44,160,44}{Green}):} The on-manifold input ($x_{gt}+\sigma_t n$).
        \textbf{(b) Ours (\textcolor[RGB]{255,127,14}{Orange}):} Our SH module aligns the spectrum perfectly, yielding clean results.
        \textbf{(c) Naive Injection (\textcolor[RGB]{214,39,40}{Red}):} Adding noise ($x+u+\sigma_t n$) creates an \textit{over-energized} spectrum, leading to over-smoothing.
        \textbf{(d) No Injection (\textcolor[RGB]{31,119,180}{Blue}):} The raw dual input ($x+u$) causes OOD hallucinations.}
    \label{fig:spectral_mechanism}
\end{figure}

\paragraph{Effectiveness of Spectral Homogenization on Denoising.}
To physically validate the mechanism of Spectral Homogenization, we analyze the frequency characteristics of the denoiser input. Figure~\ref{fig:spectral_mechanism} visualizes the spectral density and the corresponding single-step denoising output at an intermediate iteration.
We identify two distinct failure modes in baselines:
(1) The \textbf{raw dual input} (Line \textbf{d}, Blue) exhibits strong spectral coloring with high-frequency spikes, an OOD state that directly triggers artifact hallucination.
(2) The \textbf{naive injection} strategy (Line \textbf{c}, Red) results in an \textit{over-energized} spectrum (Red $>$ Green). This creates a fundamental \textbf{variance mismatch} where the effective input noise exceeds the scheduled $\sigma_t$, causing the denoiser to under-correct and leave residual noise.
In contrast, our \textbf{SH module $\mathcal{T}_{\text{SH}}$} (Line \textbf{b}, Orange) precisely fills the spectral deficits (``valleys'') without overshooting. The resulting PSD perfectly aligns with the \textbf{ideal on-manifold reference} (Line \textbf{a}, Green), rendering the input statistically indistinguishable from the training distribution. Visual results confirm that this alignment successfully activates the optimal performance of the diffusion prior, yielding a clean restoration consistent with the ground truth.

\paragraph{Inference Efficiency.}
We evaluate inference efficiency by tracking reconstruction quality as a function of the number of denoiser function evaluations (NFE).
Figure~\ref{fig:nfe_analysis} reports PSNR curves on both SVCT-20 and LACT-90 over a broad NFE range.
DC-PnPDP reaches high-fidelity reconstructions with substantially fewer denoiser calls than the strongest memoryless PnP diffusion baseline, DiffPIR.
In particular, DC-PnPDP at \textbf{30 NFEs} already surpasses DiffPIR at \textbf{100 NFEs}, corresponding to an approximately \textbf{3.3$\times$} reduction in denoiser evaluations at matched or better reconstruction quality.
The full-range curves further show that this advantage is not merely an early-stopping effect.
On SVCT-20, DiffPIR requires up to \textbf{1000 NFEs} to approach the quality achieved by DC-PnPDP with only \textbf{50 NFEs}; on LACT-90, the corresponding comparison is approximately \textbf{1000 NFEs} for DiffPIR versus \textbf{200 NFEs} for DC-PnPDP.
Although the performance gap narrows as the denoiser budget increases to 1000 NFEs, DC-PnPDP remains consistently above the baseline across the entire range, suggesting that the dual-coupled feedback improves both inference efficiency and the final steady-state solution.

\begin{figure}[t]
    \centering
    \includegraphics[width=1\columnwidth]{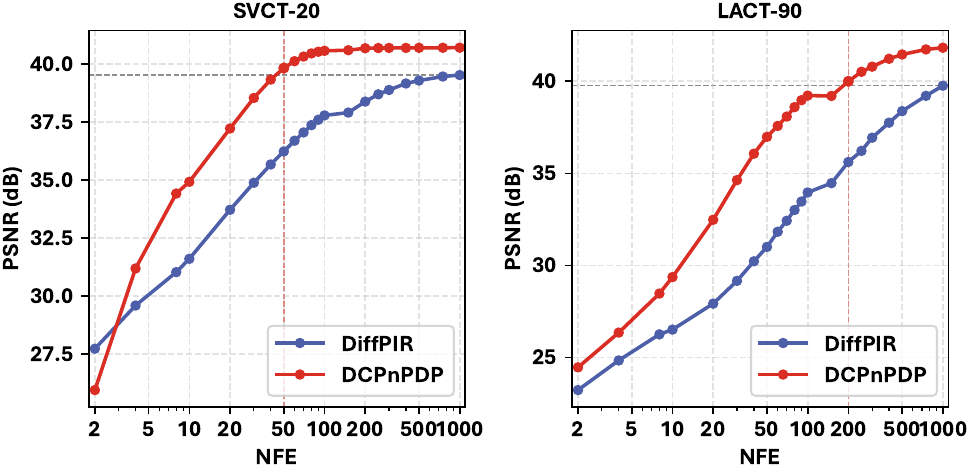}
    \caption{Inference-efficiency comparison on SVCT-20 and LACT-90. PSNR is plotted as a function of NFE to compare reconstruction quality under different denoiser-evaluation budgets. DC-PnPDP reaches higher fidelity with fewer denoiser calls and maintains a stronger high-budget plateau than DiffPIR.}
    \label{fig:nfe_analysis}
\end{figure}

\section{Conclusion and Discussion}
\label{sec:conclusion}

In this work, we proposed \textbf{DC-PnP Diffusion Prior}, a framework that pioneers the integration of the \textit{dual variable}—a cornerstone of classical ADMM optimization—into the PnP diffusion paradigm.
Our primary motivation was to bridge the gap between rigorous mathematical optimization and modern generative priors, injecting the proven convergence properties and geometric precision of traditional solvers into diffusion model based reconstruction.
We identified that this integration, while theoretically sound, is practically non-trivial: the reintroduction of the dual variable inevitably triggers structured spectral artifacts (OOD inputs) that disrupt the diffusion model.
To resolve this, we proposed \textit{Spectral Homogenization (SH)}, a tailored mechanism that adapts these dual-driven residuals into a statistically valid pseudo-white noise, thereby enabling the safe and effective utilization of the dual mechanism.
Extensive experiments demonstrate that our approach achieves SOTA fidelity and superior inference efficiency.

Broadly, our work offers a new perspective on merging \textbf{traditional optimization wisdom} with \textbf{deep generative models}. The key message is not simply to replace classical solvers with learned priors, but to preserve the mathematical structures that make classical optimization reliable and adapt them to the statistical interface required by modern diffusion models. We hope this framework serves as a stepping stone for future research on revitalizing classical algorithmic mechanisms, such as integral action, momentum, adaptive penalties, or primal-dual dynamics, to enhance the fidelity, robustness, and efficiency of data-driven priors in large-scale inverse problems.

\newpage

\section*{Acknowledgements}

This work was supported by the National Key Research and Development Program of China (2024YFC2421100) and the National Natural Science Foundation of China (Grant No. 62571328), and in part by the National Natural Science Foundation of China (Grant No. W2431046), the MoE Key Lab of Intelligent Perception and Human--Machine Collaboration at ShanghaiTech University, and the Shanghai Frontiers Science Center of Human-centered Artificial Intelligence.

\section*{Impact Statement}
This paper presents work whose goal is to advance the field of Machine Learning. There are many potential societal consequences of our work, none which we feel must be specifically highlighted here.

\bibliography{example_paper}

@inproceedings{alkhouri2025sitcom,
  title     = {{SITCOM}: Step-wise Triple-Consistent Diffusion Sampling for Inverse Problems},
  author    = {Alkhouri, Ismail and Liang, Shijun and Huang, Cheng-Han and Dai, Jimmy and Qu, Qing and Ravishankar, Saiprasad and Wang, Rongrong},
  booktitle = {Proc. Int. Conf. Mach. Learn. (ICML)},
  pages     = {1128--1158},
  volume    = {267},
  series    = {Proceedings of Machine Learning Research},
  publisher = {PMLR},
  year      = {2025}
}

@article{bendel2025solving,
  title   = {Solving Inverse Problems using Diffusion with Iterative Colored Renoising},
  author  = {Bendel, Matthew C. and Shastri, Saurav K. and Ahmad, Rizwan and Schniter, Philip},
  journal = {Trans. Mach. Learn. Res.},
  year    = {2025}
}

@book{bequette2003process,
  title     = {Process control: modeling, design, and simulation},
  author    = {Bequette, B Wayne},
  year      = {2003},
  publisher = {Prentice Hall Professional}
}

@article{boyd2011distributed,
  title     = {Distributed optimization and statistical learning via the alternating direction method of multipliers},
  author    = {Boyd, Stephen and Parikh, Neal and Chu, Eric and Peleato, Borja and Eckstein, Jonathan and others},
  journal   = {Found. Trends Mach. Learn.},
  volume    = {3},
  number    = {1},
  pages     = {1--122},
  year      = {2011},
  publisher = {Now Publishers, Inc.}
}

@article{chan2016plug,
  title     = {Plug-and-play ADMM for image restoration: Fixed-point convergence and applications},
  author    = {Chan, Stanley H and Wang, Xiran and Elgendy, Omar A},
  journal   = {IEEE Trans. Comput. Imag.},
  volume    = {3},
  number    = {1},
  pages     = {84--98},
  year      = {2016},
  publisher = {IEEE}
}

@inproceedings{chu2025split,
  title     = {Split Gibbs Discrete Diffusion Posterior Sampling},
  author    = {Chu, Wenda and Wu, Zihui and Chen, Yifan and Song, Yang and Yue, Yisong},
  booktitle = {Adv. Neural Inf. Process. Syst. (NeurIPS)},
  volume    = {38},
  pages     = {145122--145153},
  year      = {2025}
}

@inproceedings{chung2022diffusion,
  title     = {Diffusion Posterior Sampling for General Noisy Inverse Problems},
  author    = {Chung, Hyungjin and Kim, Jeongsol and Mccann, Michael Thompson and Klasky, Marc Louis and Ye, Jong Chul},
  booktitle = {Proc. Int. Conf. Learn. Represent. (ICLR)},
  year      = {2023}
}

@inproceedings{chung2022improving,
  title   = {Improving diffusion models for inverse problems using manifold constraints},
  author  = {Chung, Hyungjin and Sim, Byeongsu and Ryu, Dohoon and Ye, Jong Chul},
  booktitle = {Adv. Neural Inf. Process. Syst. (NeurIPS)},
  volume  = {35},
  pages   = {25683--25696},
  year    = {2022}
}

@inproceedings{chung2024decomposed,
  title     = {Decomposed diffusion sampler for accelerating large-scale inverse problems},
  author    = {Chung, Hyungjin and Lee, Suhyeon and Ye, Jong Chul},
  booktitle = {Proc. Int. Conf. Learn. Represent. (ICLR)},
  year      = {2024}
}

@article{coeurdoux2024plug,
  title   = {Plug-and-Play Split Gibbs Sampler: Embedding Deep Generative Priors in Bayesian Inference},
  author  = {Coeurdoux, Florentin and Dobigeon, Nicolas and Chainais, Pierre},
  journal = {IEEE Trans. Image Process.},
  volume  = {33},
  pages   = {3496--3509},
  year    = {2024}
}

@inproceedings{dhariwal2021diffusion,
  title   = {Diffusion models beat gans on image synthesis},
  author  = {Dhariwal, Prafulla and Nichol, Alexander},
  booktitle = {Adv. Neural Inf. Process. Syst. (NeurIPS)},
  volume  = {34},
  pages   = {8780--8794},
  year    = {2021}
}

@article{du2024dper,
  title   = {DPER: Diffusion prior driven neural representation for limited angle and sparse view CT reconstruction},
  author  = {Du, Chenhe and Lin, Xiyue and Wu, Qing and Tian, Xuanyu and Su, Ying and Luo, Zhe and Zheng, Rui and Chen, Yang and Wei, Hongjiang and Zhou, S Kevin and others},
  journal = {arXiv preprint arXiv:2404.17890},
  year    = {2024}
}

@inproceedings{ho2020denoising,
  title   = {Denoising diffusion probabilistic models},
  author  = {Ho, Jonathan and Jain, Ajay and Abbeel, Pieter},
  booktitle = {Adv. Neural Inf. Process. Syst. (NeurIPS)},
  volume  = {33},
  pages   = {6840--6851},
  year    = {2020}
}

@article{hou2022truncated,
  title   = {Truncated Residual Based Plug-and-Play {ADMM} Algorithm for {MRI} Reconstruction},
  author  = {Hou, Ruizhi and Li, Fang and Zhang, Guixu},
  journal = {IEEE Trans. Comput. Imag.},
  volume  = {8},
  pages   = {96--108},
  year    = {2022}
}

@inproceedings{hurault2022proximal,
  title        = {Proximal Denoiser for Convergent Plug-and-Play Optimization with Nonconvex Regularization},
  author       = {Hurault, Samuel and Leclaire, Arthur and Papadakis, Nicolas},
  booktitle    = {Proc. Int. Conf. Mach. Learn. (ICML)},
  pages        = {9483--9505},
  volume       = {162},
  series       = {Proceedings of Machine Learning Research},
  publisher    = {PMLR},
  year         = {2022}
}

@article{kamilov2023plug,
  title     = {Plug-and-play methods for integrating physical and learned models in computational imaging: Theory, algorithms, and applications},
  author    = {Kamilov, Ulugbek S and Bouman, Charles A and Buzzard, Gregery T and Wohlberg, Brendt},
  journal   = {IEEE Signal Process. Mag.},
  volume    = {40},
  number    = {1},
  pages     = {85--97},
  year      = {2023},
  publisher = {IEEE}
}

@inproceedings{karras2022elucidating,
  title   = {Elucidating the design space of diffusion-based generative models},
  author  = {Karras, Tero and Aittala, Miika and Aila, Timo and Laine, Samuli},
  booktitle = {Adv. Neural Inf. Process. Syst. (NeurIPS)},
  volume  = {35},
  pages   = {26565--26577},
  year    = {2022}
}

@article{kim2025dual,
  title   = {Dual Ascent Diffusion for Inverse Problems},
  author  = {Kim, Minseo and Levy, Axel and Wetzstein, Gordon},
  journal = {arXiv preprint arXiv:2505.17353},
  year    = {2025}
}

@article{knoll2019assessment,
  title     = {Assessment of the generalization of learned image reconstruction and the potential for transfer learning},
  author    = {Knoll, Florian and Hammernik, Kerstin and Kobler, Erich and Pock, Thomas and Recht, Michael P and Sodickson, Daniel K},
  journal   = {Magn. Reson. Med.},
  volume    = {81},
  number    = {1},
  pages     = {116--128},
  year      = {2019},
  publisher = {Wiley Online Library}
}

@article{Ma2021AbdomenCT1K,
  author  = {Ma, Jun and Zhang, Yao and Gu, Song and Zhu, Cheng and Ge, Cheng and Zhang, Yichi and An, Xingle and Wang, Congcong and Wang, Qiyuan and Liu, Xin and Cao, Shucheng and Zhang, Qi and Liu, Shangqing and Wang, Yunpeng and Li, Yuhui and He, Jian and Yang, Xiaoping},
  journal = {IEEE Trans. Pattern Anal. Mach. Intell.},
  title   = {AbdomenCT-1K: Is Abdominal Organ Segmentation a Solved Problem?},
  year    = {2022},
  volume  = {44},
  number  = {10},
  pages   = {6695--6714}
}

@article{ma2026universal,
  title   = {Universal pre-training for generalizable incomplete-view {CT} reconstruction},
  author  = {Ma, Chenglong and others},
  journal = {Pattern Recognit.},
  volume  = {178},
  pages   = {113513},
  year    = {2026}
}

@inproceedings{park2023convergence,
  title     = {Convergence of Nonconvex {PnP-ADMM} with {MMSE} Denoisers},
  author    = {Park, Chicago and Shoushtari, Shirin and Gan, Weijie and Kamilov, Ulugbek S.},
  booktitle = {Proc. IEEE Int. Workshop Comput. Adv. Multi-Sensor Adapt. Process. (CAMSAP)},
  pages     = {511--515},
  year      = {2023}
}

@article{park2026stochastic,
  title   = {Stochastic Generative Plug-and-Play Priors},
  author  = {Park, Chicago Y. and Chandler, Edward P. and Hu, Yuyang and McCann, Michael T. and Garcia-Cardona, Cristina and Wohlberg, Brendt and Kamilov, Ulugbek S.},
  journal = {arXiv preprint arXiv:2604.03603},
  year    = {2026}
}

@inproceedings{peebles2023scalable,
  title     = {Scalable diffusion models with transformers},
  author    = {Peebles, William and Xie, Saining},
  booktitle = {Proc. IEEE/CVF Int. Conf. Comput. Vis. (ICCV)},
  pages     = {4195--4205},
  year      = {2023}
}

@incollection{robbins1992empirical,
  title     = {An empirical Bayes approach to statistics},
  author    = {Robbins, Herbert E},
  booktitle = {Breakthroughs in Statistics: Foundations and basic theory},
  pages     = {388--394},
  year      = {1992},
  publisher = {Springer}
}

@inproceedings{rombach2022high,
  title     = {High-resolution image synthesis with latent diffusion models},
  author    = {Rombach, Robin and Blattmann, Andreas and Lorenz, Dominik and Esser, Patrick and Ommer, Bj{\"o}rn},
  booktitle = {Proc. IEEE/CVF Conf. Comput. Vis. Pattern Recognit. (CVPR)},
  pages     = {10684--10695},
  year      = {2022}
}

@inproceedings{ryu2019plug,
  title        = {Plug-and-play methods provably converge with properly trained denoisers},
  author       = {Ryu, Ernest and Liu, Jialin and Wang, Sicheng and Chen, Xiaohan and Wang, Zhangyang and Yin, Wotao},
  booktitle    = {Proc. Int. Conf. Mach. Learn. (ICML)},
  pages        = {5546--5557},
  volume       = {97},
  series       = {Proceedings of Machine Learning Research},
  publisher    = {PMLR},
  year         = {2019}
}

@inproceedings{shoushtari2024prior,
  title        = {Prior Mismatch and Adaptation in {PnP-ADMM} with a Nonconvex Convergence Analysis},
  author       = {Shoushtari, Shirin and Liu, Jiaming and Chandler, Edward P. and Asif, M. Salman and Kamilov, Ulugbek S.},
  booktitle    = {Proc. Int. Conf. Mach. Learn. (ICML)},
  pages        = {45154--45182},
  volume       = {235},
  series       = {Proceedings of Machine Learning Research},
  publisher    = {PMLR},
  year         = {2024}
}

@inproceedings{shrestha2026taming,
  title     = {Taming Score-Based Denoisers in {ADMM}: A Convergent Plug-and-Play Framework},
  author    = {Shrestha, Rajesh and Fu, Xiao},
  booktitle = {Proc. Int. Conf. Learn. Represent. (ICLR)},
  year      = {2026}
}

@inproceedings{song2020score,
  title     = {Score-based generative modeling through stochastic differential equations},
  author    = {Song, Yang and Sohl-Dickstein, Jascha and Kingma, Diederik P and Kumar, Abhishek and Ermon, Stefano and Poole, Ben},
  booktitle = {Proc. Int. Conf. Learn. Represent. (ICLR)},
  year      = {2021}
}

@inproceedings{song2021solving,
  title     = {Solving Inverse Problems in Medical Imaging with Score-Based Generative Models},
  author    = {Song, Yang and Shen, Liyue and Xing, Lei and Ermon, Stefano},
  booktitle = {Proc. Int. Conf. Learn. Represent. (ICLR)},
  year      = {2022}
}

@inproceedings{song2024solving,
  title     = {Solving Inverse Problems with Latent Diffusion Models via Hard Data Consistency},
  author    = {Song, Bowen and Kwon, Soo Min and Zhang, Zecheng and Hu, Xinyu and Qu, Qing and Shen, Liyue},
  booktitle = {Proc. Int. Conf. Learn. Represent. (ICLR)},
  year      = {2024}
}

@article{tang2020fast,
  title   = {A Fast Stochastic Plug-and-Play {ADMM} for Imaging Inverse Problems},
  author  = {Tang, Junqi and Davies, Mike},
  journal = {arXiv preprint arXiv:2006.11630},
  year    = {2020}
}

@inproceedings{venkatakrishnan2013plug,
  title        = {Plug-and-play priors for model based reconstruction},
  author       = {Venkatakrishnan, Singanallur V and Bouman, Charles A and Wohlberg, Brendt},
  booktitle    = {Proc. IEEE Global Conf. Signal Inf. Process. (GlobalSIP)},
  pages        = {945--948},
  year         = {2013},
  organization = {IEEE}
}

@inproceedings{wang2022zero,
  title     = {Zero-Shot Image Restoration Using Denoising Diffusion Null-Space Model},
  author    = {Wang, Yinhuai and Yu, Jiwen and Zhang, Jian},
  booktitle = {Proc. Int. Conf. Learn. Represent. (ICLR)},
  year      = {2023}
}

@inproceedings{wang2024dmplug,
  title   = {Dmplug: A plug-in method for solving inverse problems with diffusion models},
  author  = {Wang, Hengkang and Zhang, Xu and Li, Taihui and Wan, Yuxiang and Chen, Tiancong and Sun, Ju},
  booktitle = {Adv. Neural Inf. Process. Syst. (NeurIPS)},
  volume  = {37},
  pages   = {117881--117916},
  year    = {2024}
}

@inproceedings{wu2024principled,
  title   = {Principled probabilistic imaging using diffusion models as plug-and-play priors},
  author  = {Wu, Zihui and Sun, Yu and Chen, Yifan and Zhang, Bingliang and Yue, Yisong and Bouman, Katherine L},
  booktitle = {Adv. Neural Inf. Process. Syst. (NeurIPS)},
  volume  = {37},
  pages   = {118389--118427},
  year    = {2024}
}

@inproceedings{xu2024provably,
  title   = {Provably robust score-based diffusion posterior sampling for plug-and-play image reconstruction},
  author  = {Xu, Xingyu and Chi, Yuejie},
  booktitle = {Adv. Neural Inf. Process. Syst. (NeurIPS)},
  volume  = {37},
  pages   = {36148--36184},
  year    = {2024}
}

@inproceedings{yang2023dsg,
  title     = {Guidance with Spherical Gaussian Constraint for Conditional Diffusion},
  author    = {Lingxiao Yang and Shutong Ding and Yifan Cai and Jingyi Yu and Jingya Wang and Ye Shi},
  booktitle = {Proc. Int. Conf. Mach. Learn. (ICML)},
  pages     = {56071--56095},
  volume    = {235},
  series    = {Proceedings of Machine Learning Research},
  publisher = {PMLR},
  year      = {2024}
}

@article{zbontar2018fastMRI,
  title         = {{fastMRI}: An Open Dataset and Benchmarks for Accelerated {MRI}},
  author        = {Jure Zbontar and Florian Knoll and Anuroop Sriram and Tullie Murrell and Zhengnan Huang and Matthew J. Muckley and Aaron Defazio and Ruben Stern and Patricia Johnson and Mary Bruno and Marc Parente and Krzysztof J. Geras and Joe Katsnelson and Hersh Chandarana and Zizhao Zhang and Michal Drozdzal and Adriana Romero and Michael Rabbat and Pascal Vincent and Nafissa Yakubova and James Pinkerton and Duo Wang and Erich Owens and C. Lawrence Zitnick and Michael P. Recht and Daniel K. Sodickson and Yvonne W. Lui},
  journal       = {arXiv preprint arXiv:1811.08839},
  archiveprefix = {arXiv},
  eprint        = {1811.08839},
  year          = {2018}
}

@inproceedings{zhang2025decoupling,
  title     = {Decoupling Training-Free Guided Diffusion by {ADMM}},
  author    = {Zhang, Youyuan and Liu, Zehua and Li, Zenan and Li, Zhaoyu and Clark, James J. and Si, Xujie},
  booktitle = {Proc. IEEE/CVF Conf. Comput. Vis. Pattern Recognit. (CVPR)},
  pages     = {23292--23302},
  year      = {2025}
}

@inproceedings{zhang2025improving,
  title     = {Improving diffusion inverse problem solving with decoupled noise annealing},
  author    = {Zhang, Bingliang and Chu, Wenda and Berner, Julius and Meng, Chenlin and Anandkumar, Anima and Song, Yang},
  booktitle = {Proc. IEEE/CVF Conf. Comput. Vis. Pattern Recognit. (CVPR)},
  pages     = {20895--20905},
  year      = {2025}
}

@inproceedings{zheng2025inversebench,
  title     = {InverseBench: Benchmarking Plug-and-Play Diffusion Priors for Inverse Problems in Physical Sciences},
  author    = {Zheng, Hongkai and Chu, Wenda and Zhang, Bingliang and Wu, Zihui and Wang, Austin and Feng, Berthy and Zou, Caifeng and Sun, Yu and Kovachki, Nikola Borislavov and Ross, Zachary E and others},
  booktitle = {Proc. Int. Conf. Learn. Represent. (ICLR)},
  year      = {2025}
}

@inproceedings{zhu2023denoising,
  title     = {Denoising Diffusion Models for Plug-and-Play Image Restoration},
  author    = {Zhu, Yuanzhi and Zhang, Kai and Liang, Jingyun and Cao, Jiezhang and Wen, Bihan and Timofte, Radu and Van Gool, Luc},
  booktitle = {Proc. IEEE/CVF Conf. Comput. Vis. Pattern Recognit. (CVPR)},
  pages     = {1219--1229},
  year      = {2023}
}
\bibliographystyle{icml2026}

\newpage
\appendix
\onecolumn
\section{Appendix.}

\subsection{Detailed Algorithm}
\begin{algorithm}[!htbp]
    \caption{Detailed Implementation: Dual-Coupled PnP Diffusion}
    \label{alg:detailed_implementation}
    \begin{algorithmic}[1]
        \STATE {\bfseries Input:} Measurements $\mathbf{y}$, Forward Operator $\mathbf{A}$, Pre-trained Denoiser $\mathcal{D}_\theta$.
        \STATE {\bfseries Hyperparameters:}
        \STATE \quad Total Iterations $K$, Penalty Parameter $\rho$.
        \STATE \quad Noise Schedule $\{\sigma_t\}_{t=1}^{T}$ (aligned with denoiser training).
        \STATE \quad SH Smoothing Kernel $\mathcal{K}_w$ (e.g., Gaussian with window size $w=7$).
        \STATE \quad PSD Floor $\epsilon$.
        \STATE {\bfseries Initialize:} $\mathbf{x}^{(0)} \leftarrow \mathbf{A}^\top \mathbf{y}$, $\mathbf{z}^{(0)} \sim \mathcal{N}(\mathbf{0}, \mathbf{I})$, $\mathbf{u}^{(0)} \leftarrow \mathbf{0}$.

        \STATE \textcolor{gray}{// Begin Iterative Restoration}
        \FOR{$k = 0$ {\bfseries to} $K-1$}
        \STATE $t \leftarrow t_k$ \COMMENT{Map iteration $k$ to diffusion timestep}

        \STATE \textcolor[RGB]{78, 95, 170}{\(\triangleright\) \textit{1. Data Fidelity Optimization}}
        \STATE $\mathbf{x}^{(k+1)} \leftarrow \underset{\mathbf{x}}{\arg\min} \|\mathbf{A}\mathbf{x} - \mathbf{y}\|_2^2 + {\rho}\|\mathbf{x} - \mathbf{z}^{(k)} + \mathbf{u}^{(k)}\|_2^2$

        \STATE \textcolor[RGB]{237, 125, 49}{\(\triangleright\) \textit{2. Spectral Homogenization}}
        \STATE $\mathbf{v}^{(k+1)} \leftarrow \mathbf{x}^{(k+1)} + \mathbf{u}^{(k)}$ \COMMENT{Dual-shifted input}
        \STATE \textcolor{gray}{// A.Spectral Density Estimation of the Residual}
        \STATE $\mathbf{r} \leftarrow \mathbf{v}^{(k+1)} - \mathbf{z}^{(k)}$
        \STATE $\hat{S}_{\mathbf{r}} \leftarrow \big| \mathcal{F}(\mathbf{r}) \big|^2 * \mathcal{K}_w$ \COMMENT{Smoothed Power Spectral Density}
        \STATE \textcolor{gray}{// B. Spectral Deficit Calculation (Clipping)}
        \STATE $\Delta S \leftarrow \max\left(\epsilon, \sigma_t^2(HW) - \hat{S}_{\mathbf{r}}\right)$ \COMMENT{Target noise level $\sigma_t$ varies with $t$}
        \STATE \textcolor{gray}{// C. Phase Transplantation (Ensures Hermitian Symmetry)}
        \STATE Generate spatial white noise $\mathbf{n} \sim \mathcal{N}(\mathbf{0}, \mathbf{I})$
        \STATE $\mathbf{\Phi} \leftarrow \mathcal{F}(\mathbf{n}) \oslash |\mathcal{F}(\mathbf{n})|$ \COMMENT{Extract random phase structure}
        \STATE $\boldsymbol{\xi} \leftarrow \mathcal{F}^{-1} \left( \sqrt{\Delta S} \odot \mathbf{\Phi} \right)$ \COMMENT{Synthesize complementary noise}
        \STATE \textcolor{gray}{// D. Homogenized Input Construction}
        \STATE $\tilde{\mathbf{v}}^{(k+1)} \leftarrow \mathbf{v}^{(k+1)} + \boldsymbol{\xi}$

        \STATE \textcolor[RGB]{204, 68, 62}{\(\triangleright\) \textit{3. Denoising Prediction}}
        \STATE $\mathbf{z}^{(k+1)} \leftarrow \mathcal{D}_\theta(\tilde{\mathbf{v}}^{(k+1)}, t)$

        \STATE \textcolor[RGB]{89,179,80}{\(\triangleright\) \textit{4. Dual Variable Update}}
        \STATE $\mathbf{u}^{(k+1)} \leftarrow \mathbf{u}^{(k)} + (\mathbf{x}^{(k+1)} - \mathbf{z}^{(k+1)})$
        \ENDFOR

        \STATE {\bfseries Return} $\mathbf{z}^{(K)}$
    \end{algorithmic}
\end{algorithm}

\subsection{Additional Dataset and Diffusion Prior Details}
\label{app:exp_settings}

\paragraph{CT Data, Pre-processing, and Forward Model.}
We utilize the \textbf{AbdomenCT-1K} dataset~\citep{Ma2021AbdomenCT1K} for training and evaluation. To ensure high-quality priors, we strictly screen the dataset and select scans with a slice thickness of $\le 1$ mm. This screening process yields a subset of 225 high-resolution scans. We randomly reserve 6 scans for the test set and use the remaining 219 scans for training.
Consistent with the main text, the Hounsfield Unit (HU) values are clipped to the range $[-800, 800]$ and linearly normalized to $[-1, 1]$.
For the parallel-beam simulation, we employ a 1D flat detector with 363 uniformly spaced bins and a detector pitch of 1.0 mm.

\paragraph{CT Diffusion Prior Training.}
We adopt the EDM framework~\citep{karras2022elucidating} for training the diffusion prior. The network architecture is a \texttt{DDPM++} U-Net~\citep{song2020score} optimized for generating $256 \times 256$ images.
Training was conducted on a cluster of 8 $\times$ NVIDIA A100 80GB GPUs.
We trained the model for approximately 35,000 steps (equivalent to $\sim$37 epochs) with a total batch size of 112. The entire training process took approximately 24 hours.
We used the Adam optimizer with the following hyperparameters: learning rate $lr = 1 \times 10^{-4}$, $\beta_1 = 0.9$, $\beta_2 = 0.999$, and $\epsilon = 10^{-8}$.

\paragraph{MRI Data and Pretrained Diffusion Prior.}
We use the data split and pretrained MRI diffusion weights in \textsc{InverseBench}~\citep{zheng2025inversebench}, which are built on the fastMRI dataset~\citep{zbontar2018fastMRI}. We evaluate two anatomical regions, brain and knee. For accelerated reconstruction, we retrospectively subsample the fully sampled k-space using one-dimensional equispaced Cartesian masks along the phase-encoding direction, with acceleration factors $R=6$ and $R=10$, and reconstruct under a single-coil Fourier encoding model. This setting is intended to isolate the severely undersampled inverse problem, where reconstruction cannot rely on parallel-imaging gains from coil sensitivity maps. Equispaced Cartesian subsampling also induces coherent aliasing artifacts, which are more structured and challenging than the incoherent artifacts produced by random subsampling. Across both anatomical regions, all PnP diffusion solvers use the same pretrained MRI diffusion weights; no method-specific MRI retraining or fine-tuning is performed. For our complex-valued SH module, the frequency-domain modulation is applied independently to the real and imaginary channels.

\subsection{Implementation Details of Baselines}
\label{app:baseline_details}

We provide detailed hyperparameter settings for all competing methods to ensure reproducibility.
When a baseline contains task-dependent numerical settings, we tune them on a reserved 10-slice validation split for each task rather than transferring MRI-specific configurations to CT.

\paragraph{DDNM~\citep{wang2022zero}.}
DDNM relies on a range-null space decomposition. The update rule can be expressed as
\begin{equation}
    \mathbb{E}\!\left[ \mathbf{x}_{0} \mid \mathbf{x}_{t}, \mathbf{y} \right]^{(\text{DDNM})}
    \approx \left( \mathbf{I} - \mathbf{A}^{\dagger}\mathbf{A} \right) D_{\theta^{*}}(\mathbf{x}_{t}) + \mathbf{A}^{\dagger}\mathbf{y},
    \label{eq:ddnm}
\end{equation}
where $\mathbf{A}^{\dagger}$ denotes the pseudo-inverse of the measurement operator $\mathbf{A}$.
For CT reconstruction, $\mathbf{A}^\dagger$ is approximated using the Simultaneous Algebraic Reconstruction Technique (SIRT). We set the number of SIRT iterations to {30} for all experiments.

\paragraph{DDS~\citep{chung2024decomposed}.}
DDS solves the proximal data-consistency sub-problem using the Conjugate Gradient (CG) method. We set the number of CG iterations to \textbf{5}, following the original paper and official implementation.

\paragraph{DAPS~\citep{zhang2025improving}.}
We utilize the official codebase for DAPS and report it under two distinct evaluation protocols.
In the main recommended-budget comparison (Table~\ref{tab:recommended_budget}), DAPS is evaluated with its DAPS-1K setting, i.e., 5 ODE prior steps and 200 annealing/data-consistency steps, corresponding to the $5{\times}200$ budget shown in the table.
In the controlled 50-NFE appendix comparison (Table~\ref{tab:main_results}), we instead match the denoiser budget across diffusion solvers: the prior sampling step is implemented as a 1-step ODE solver (equivalent to Tweedie's denoising formula), while the data-consistency step uses Langevin dynamics with 100 iterations.
The Langevin parameters are selected on the task-specific validation split:
\begin{itemize}
    \item \textbf{LACT-90:} \texttt{langevin\_step\_size}=$4.1{\times}10^{-6}$, \texttt{noise\_level}=$2.5{\times}10^{-1}$, and \texttt{step\_size\_decay}=$1.2{\times}10^{-1}$.
    \item \textbf{SVCT-20:} \texttt{langevin\_step\_size}=$1.3{\times}10^{-7}$, \texttt{noise\_level}=$1.2{\times}10^{-1}$, and \texttt{step\_size\_decay}=$8.3{\times}10^{-2}$.
    \item \textbf{MRI:} \texttt{langevin\_step\_size}=$1.0{\times}10^{-7}$, \texttt{noise\_level}=$1.0{\times}10^{-3}$, and \texttt{step\_size\_decay}=$2.0{\times}10^{-1}$.
\end{itemize}

\paragraph{SITCOM~\citep{alkhouri2025sitcom}.}
SITCOM is included as an additional diffusion-based baseline in the expanded evaluation. We use its recommended 50 outer diffusion steps with 10 inner optimization steps. The learning rate of the inner optimization is set to 0.02 for the LACT and SVCT tasks, and to 0.004 for the MRI tasks.

\paragraph{Controlled Comparison: DiffPIR~\cite{zhu2023denoising} vs. DC-PnPDP (Ours).}
To rigorously isolate the contribution of our proposed modules, we formulate the comparison between DiffPIR~\citep{zhu2023denoising} and DC-PnPDP as a controlled ablation. These two methods are implemented under a strictly unified optimization interface and hyperparameter setting, with the following key points:
\begin{itemize}
    \item \textbf{Formulation:} DiffPIR can be regarded as a standard \textbf{PnP-HQS} solver, while our DC-PnPDP is a \textbf{PnP-ADMM} solver.
    \item \textbf{Data Consistency (DC):} Both methods use the exact same Conjugate Gradient (CG) solver for the data-consistency sub-problem ($\min_\mathbf{x} \|\mathbf{y}-\mathbf{A}\mathbf{x}\|^2 + \rho\|\mathbf{x}-\mathbf{v}\|^2$).
          \begin{itemize}
              \item For \textbf{Sparse-View (SVCT)}, we set CG iterations = 20.
              \item For \textbf{Limited-Angle (LACT)}, we set CG iterations = 100 and the proximal penalty parameter $\rho = 1 \times 10^{-5}\times \frac{1}{\sigma_t^2}$.
          \end{itemize}
    \item \textbf{Key Difference:} Under this strictly controlled setting, the performance gap is exclusively attributed to two factors: (1) the update of the dual variable $\mathbf{u}$ (present in ours, absent in DiffPIR), and (2) the noise injection strategy (Spectral Homogenization in ours vs. standard DDIM-style AWGN in DiffPIR).
\end{itemize}
DC-PnPDP inherits the optimized DiffPIR data-consistency hyperparameters in this controlled comparison and only adds the dual-coupling and SH modules. The DiffPIR comparison is implemented through a PnP-HQS proximal data-consistency step. For single-coil MRI, this subproblem has a closed-form solution; for CT, it is solved by CG. This differs from implementations that use a single gradient step per diffusion iteration and therefore require substantially larger step counts to obtain comparable data-consistency optimization depth.

\section{Additional Experimental Results and Diagnostics}
\label{app:additional_results}

\subsection{Controlled 50-NFE Evaluation}
\label{app:controlled_results}
The main text reports results under the sampling budgets recommended for each competing method, which is the most relevant protocol for assessing final practical performance. We additionally keep the controlled 50-NFE protocol in the appendix because it answers a complementary question: \textit{when all diffusion-based solvers are allowed the same number of denoiser evaluations, how much of the performance difference can be attributed to the optimization interface rather than to additional prior calls?} This comparison is especially useful for isolating the effect of replacing the memoryless PnP-HQS interface with the proposed dual-coupled PnP-ADMM interface.

Table~\ref{tab:main_results} presents the matched-budget results on the CT tasks and fastMRI knee reconstruction. With the denoiser budget fixed at 50 NFEs for all diffusion-based methods, DC-PnPDP remains consistently competitive and often achieves the best reconstruction quality. These results suggest that the improvement arises from the proposed dual-coupled data-consistency mechanism rather than from allocating a larger number of diffusion prior evaluations.

\begin{table}[!htbp]
       \caption{Controlled 50-NFE quantitative comparison on medical image reconstruction tasks, including SVCT with 20 views, LACT with angular range $[0,90]^\circ$, and accelerated fastMRI knee reconstruction at AF = 6 and AF = 10. The best results are highlighted in \textbf{bold} and the second-best are \underline{underlined}. Here, ``$\mathbf{A}^\dagger \mathbf{y}$'' denotes the pseudo-inverse reconstruction, i.e., FBP for CT and zero-filling for MRI.}
    \centering
    \small
    \setlength{\tabcolsep}{4pt}
    \resizebox{\linewidth}{!}{%
        \begin{tabular}{lccc ccc ccc ccc}
            \toprule
            \multirow{2.5}{*}{\textbf{Method}}              &
            \multicolumn{3}{c}{\textbf{LACT-90}}          &
            \multicolumn{3}{c}{\textbf{SVCT-20}}          &
            \multicolumn{3}{c}{\textbf{Accelerated MRI (AF=6)}} &
            \multicolumn{3}{c}{\textbf{Accelerated MRI (AF=10)}}                                                                                                                                                                                                                                                    \\
            \cmidrule(lr){2-4}\cmidrule(lr){5-7}\cmidrule(lr){8-10}\cmidrule(lr){11-13}
                                                          & PSNR $\uparrow$   & SSIM $\uparrow$   & LPIPS $\downarrow$ & PSNR $\uparrow$   & SSIM $\uparrow$   & LPIPS $\downarrow$ & PSNR $\uparrow$   & SSIM $\uparrow$   & LPIPS $\downarrow$ & PSNR $\uparrow$   & SSIM $\uparrow$   & LPIPS $\downarrow$ \\
            \midrule
            $\mathbf{A}^\dagger \mathbf{y}$               & 16.82             & 0.606             & 0.140              & 21.81             & 0.359             & 0.249              & 21.72             & 0.523             & 0.194              & 18.51             & 0.426             & 0.237              \\
            DDNM~\cite{wang2022zero}                                          & 30.62             & 0.887             & 0.025              & 36.22             & 0.931             & 0.021              & 27.95             & \underline{0.763} & 0.043              & \underline{25.32} & \underline{0.681} & 0.065              \\
            DDS~\cite{chung2024decomposed}                                           & 24.86             & 0.775             & 0.056              & \underline{36.71} & \underline{0.938} & 0.021              & 27.12             & 0.737             & \textbf{0.040}     & 24.01             & 0.644             & \textbf{0.061}     \\
            DAPS~\cite{zhang2025improving}                                          & 28.41             & 0.855             & 0.032              & 36.18             & 0.930             & \underline{0.020}  & 27.86             & 0.760             & \underline{0.042}  & 25.24             & 0.677             & \underline{0.062}  \\
            DiffPIR~\cite{zhu2023denoising}                                       & \underline{31.03} & \underline{0.892} & \underline{0.023}  & 36.23             & 0.931             & 0.021              & \underline{27.96} & \underline{0.763} & 0.043              & 25.27             & 0.680             & 0.065              \\
            DC-PnPDP (\textit{Ours})                                      & \textbf{36.98}    & \textbf{0.943}    & \textbf{0.010}     & \textbf{39.81}    & \textbf{0.960}    & \textbf{0.018}     & \textbf{28.50}    & \textbf{0.784}    & 0.046              & \textbf{26.21}    & \textbf{0.708}    & 0.066              \\
            \bottomrule
        \end{tabular}
    }

\label{tab:main_results}

\end{table}

The fastMRI knee results also help explain why the numerical gain on this MRI setting is more modest than on the CT tasks. Knee MRI in fastMRI has relatively low SNR, so pixel-wise metrics such as PSNR can be strongly influenced by residual background noise rather than by anatomical alignment alone. This effect can partially mask the geometric benefits of improved data consistency, even when the reconstruction better preserves tissue boundaries and suppresses coherent aliasing artifacts. Nevertheless, at the more challenging AF = 10 setting, DC-PnPDP still improves the runner-up by nearly 1 dB in PSNR. The qualitative examples in Figure~\ref{fig:controlled_knee_mri} are therefore important complementary evidence: they show sharper anatomical boundaries and stronger suppression of vertical aliasing streaks, consistent with the intended role of the dual-coupled update.

\begin{figure}[htbp]
    \centering
    \includegraphics[width=1\textwidth]{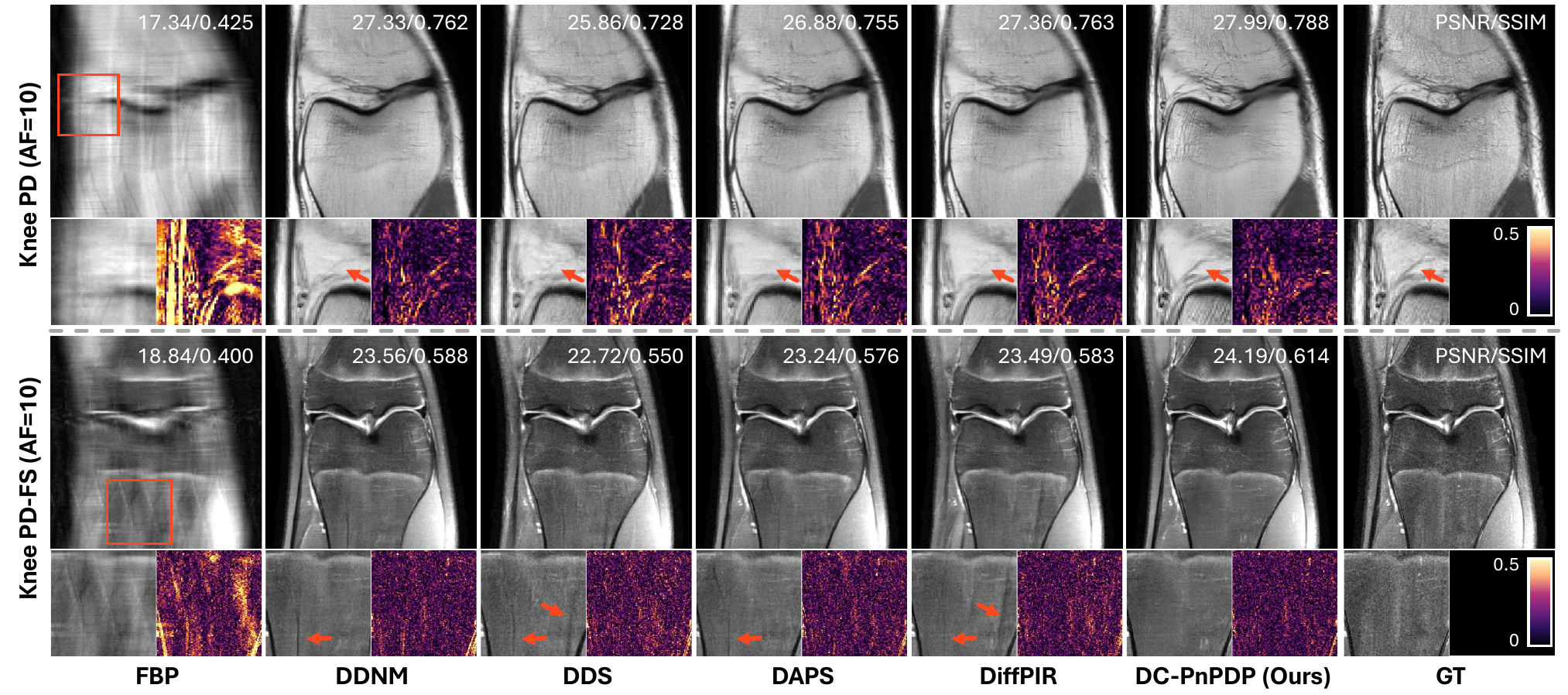}
    \caption{Qualitative results for the controlled 50-NFE fastMRI knee reconstruction setting at AF = 10. The figure shows proton density (PD) and proton density fat-suppressed (PD-FS) scans; red arrows indicate representative improvements and remaining aliasing artifacts.}
    \label{fig:controlled_knee_mri}
\end{figure}

\subsection{Discussion: Modality-Dependent Gain Magnitudes}
\label{app:modality_gain_discussion}
The magnitude of the improvement produced by DC-PnPDP varies across tasks. This variation should not be interpreted as a modality-specific limitation of the method; rather, it is largely determined by two factors: the geometric structure of the forward degradation and the signal-to-noise ratio (SNR) of the reference data used for metric evaluation.

\paragraph{Large Gains on CT.}
The largest gains appear in CT, especially in limited-angle reconstruction. In LACT, an entire angular wedge is unobserved, so the degradation does not merely introduce isotropic blur or additive noise. It creates highly directional streaking artifacts and can shift anatomical boundaries along the missing-angle directions. Memoryless PnP solvers can therefore converge to a biased fixed point that is locally plausible under the diffusion prior but geometrically inconsistent with the measurements. The dual variable in DC-PnPDP acts as an accumulated feedback path that corrects this persistent consensus error, while SH prevents the resulting structured residuals from violating the denoiser's AWGN input assumption. Correcting such geometric misalignment can yield large improvements in pixel-wise metrics, which explains the pronounced PSNR gains observed on LACT and SVCT.

\paragraph{Smaller Metric Gains on fastMRI Knee.}
The fastMRI knee task shows a different regime. The knee data have relatively lower SNR, and pixel-wise metrics such as PSNR can be substantially affected by background noise and acquisition-dependent intensity variability. In this setting, improvements in structural alignment and artifact suppression are still visible, as shown in Figure~\ref{fig:controlled_knee_mri}, but the corresponding PSNR gain is partially masked by noise components that are not directly corrected by the reconstruction algorithm.

\paragraph{Complementary Evidence from fastMRI Brain.}
The fastMRI brain task provides a complementary check of this interpretation. Brain MRI has cleaner anatomical structure and higher effective SNR in our evaluation subset, so coherent undersampling artifacts contribute more strongly to the measured error. Under this condition, the benefit of dual-coupled feedback becomes more visible: at AF=10 the gain increases to +2.24 dB. Thus, the observed gain pattern across CT, knee MRI, and brain MRI is consistent with the proposed mechanism: DC-PnPDP is most beneficial when the dominant error is a structured, physically induced residual that can be accumulated and corrected by the dual feedback loop.

\subsection{Runtime and Memory Profile}
The NFE comparisons in the main text measure how efficiently different methods use calls to the diffusion prior. However, NFE alone does not fully characterize computational cost, because different solvers may attach different data-consistency or inner-optimization procedures to each denoising call. We therefore also profile wall-clock runtime and memory usage to verify that the observed reconstruction gains are not obtained by making each iteration substantially more expensive.

Table~\ref{tab:runtime_profile} reports implementation-level measurements on SVCT-20 using an NVIDIA A100 80GB GPU with batch size 10. These timings should be interpreted as practical references under a shared implementation and hardware setting rather than as universal complexity constants. Compared with DiffPIR, DC-PnPDP keeps the same dominant denoising call and the same data-consistency cost, while adding only the dual update and the FFT-based Spectral Homogenization (SH) step.

\begin{table}[!htbp]
    \centering
    \caption{Runtime and memory profile on SVCT-20 with batch size 10. \textit{Time / Iter.} denotes the total wall-clock time per outer iteration; \textit{Denoise}, \textit{DC}, and \textit{SH} denote the time spent on the diffusion prior step, data-consistency step, and Spectral Homogenization module, respectively. All timings are reported in seconds under the same hardware and batch setting, and \textit{VRAM} denotes peak GPU memory usage.}
    \label{tab:runtime_profile}
    \footnotesize
    \setlength{\tabcolsep}{4pt}
    \begin{tabular}{lccccc}
        \toprule
        \textbf{Method}          & \textbf{Time / Iter. (s)} & \textbf{Denoise (s)} & \textbf{DC (s)} & \textbf{SH (s)} & \textbf{VRAM (MB)} \\
        \midrule
        SITCOM                   & 7.4618                    & 4.4326               & 7.4578          & --              & 38009              \\
        DAPS                     & 0.3190                    & 0.1856               & 0.1333          & --              & 3988               \\
        DiffPIR                  & 0.1951                    & 0.1856               & 0.0095          & --              & 3988               \\
        DC-PnPDP (\textit{Ours}) & 0.1958                    & 0.1856               & 0.0095          & 0.0007          & 3988               \\
        \bottomrule
    \end{tabular}
\end{table}

The breakdown of the data-consistency (DC) cost further clarifies the computational trade-off. DiffPIR and DC-PnPDP solve the linear DC subproblem with a conjugate-gradient (CG) solver, which only requires applications of the forward operator and its adjoint and is therefore highly efficient when the measurement model is explicitly available. DAPS instead performs Langevin dynamics for the DC step, resulting in repeated gradient-based updates under the measurement model and a moderately higher DC cost. SITCOM is substantially more expensive: its DPS-style inner optimization differentiates a measurement-consistency objective through the denoising network, which requires vector-Jacobian products (equivalently, backpropagating through the full denoiser) at each inner step.

This distinction is important for the large-scale medical inverse problems targeted in this work. Volumetric CT and MRI reconstruction often involve large 3D arrays, where repeated denoiser backpropagation can become a major runtime and memory bottleneck. When the forward model is well specified and the adjoint operator can be implemented explicitly, directly solving the resulting linearized DC system with CG is typically much more scalable than DPS-style gradient guidance through the generative network. The measured overhead of DC-PnPDP over DiffPIR is negligible in this setting: the per-iteration runtime changes from 0.1951 s to 0.1958 s, and the SH module itself costs only 0.0007 s. The peak memory is also unchanged. These measurements support the interpretation that the empirical gains of DC-PnPDP primarily come from a better data-consistency feedback mechanism rather than from a larger per-step computational budget.

\subsection{Residual and Inference-Budget Diagnostics}
The PSNR-vs-NFE curves in the main text are intended to measure inference efficiency under a fixed denoiser budget, but they are not by themselves a formal optimization-convergence analysis. To examine the behavior that is most directly tied to the proposed mechanism, we additionally track residual-based quantities: the measurement residual $\|\mathbf{A}\mathbf{x}^{(k)}-\mathbf{y}\|_2$ and the norm of the accumulated dual state. These diagnostics are closer to the physical constraint and to the predicted steady-state bias of memoryless PnP solvers.

\begin{figure}[htbp]
    \centering
    \includegraphics[width=0.8\textwidth]{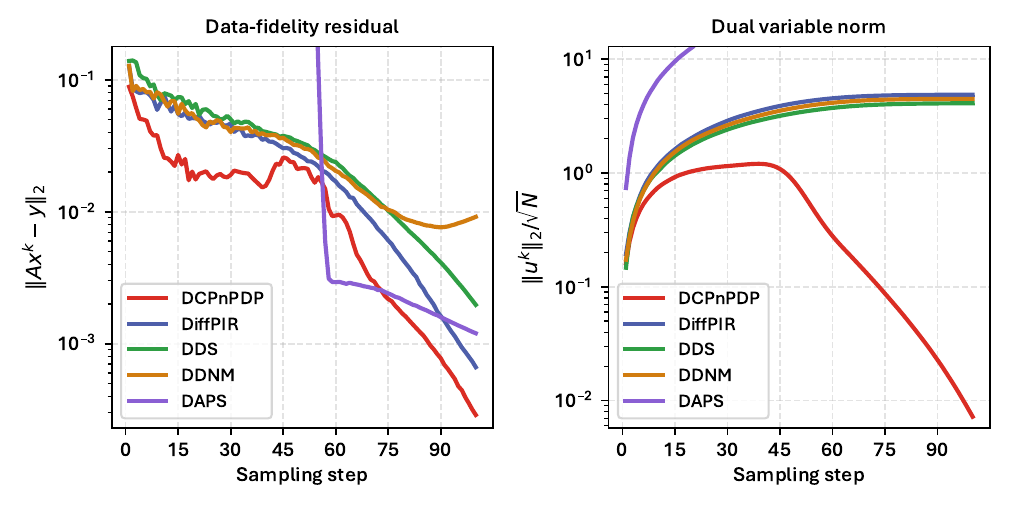}
    \caption{Residual diagnostics on LACT-90. The curves report the data-fidelity residual and the norm of the accumulated dual state, providing optimization-oriented evidence complementary to PSNR-based evaluation.}
    \label{fig:additional_residual_curves}
\end{figure}

Figure~\ref{fig:additional_residual_curves} reports these quantities on LACT-90, a setting where the missing angular range induces strongly directional artifacts and therefore stresses the data-consistency mechanism. DC-PnPDP reduces the measurement residual more effectively than the memoryless baselines. At the same time, the dual-state curve shows that the accumulated mismatch is actively corrected rather than left as a persistent consensus gap. This behavior is consistent with the fixed-point analysis in Appendix~\ref{app:theory}: the dual variable acts as a feedback path that removes the steady-state bias of purely instantaneous updates.

\flushbottom
\subsection{Additional Baseline and Scope Comparisons}
The main experiments focus on zero-shot diffusion-based inverse solvers. To place these results in a broader application context, we also compare with ProCT~\citep{ma2026universal}, a recent supervised incomplete-view CT reconstruction model. This comparison is not intended to replace a full supervised benchmark, since supervised methods depend strongly on their training distribution and task specification. Instead, it tests a practical question: how does a zero-shot dual-coupled diffusion solver compare against a strong task-specific supervised model when both in-domain and OOD settings are considered?

Table~\ref{tab:proct_comparison} reports per-task PSNR on AAPM and DeepLesion. DeepLesion is in-domain for ProCT, while AAPM is out-of-distribution for ProCT; our diffusion prior is not trained specifically for either test set. Across both datasets and across both sparse-view CT (SVCT) and limited-angle CT (LACT), DC-PnPDP obtains the best PSNR in all reported settings. The advantage is particularly clear in the OOD AAPM setting, which is consistent with the zero-shot design goal of combining a general diffusion prior with an explicit physics-based data-consistency loop.

\begin{table}[!htbp]
    \centering
    \caption{Quantitative PSNR comparison against the supervised ProCT baseline on AAPM and DeepLesion. Results are grouped by dataset and CT setting; the best results are highlighted in \textbf{bold} and the second-best are \underline{underlined}.}
    \label{tab:proct_comparison}
    \footnotesize
    \setlength{\tabcolsep}{2.5pt}
    \begin{tabular}{llccccccc}
        \toprule
        \multirow{2}{*}{\textbf{Dataset}} & \multirow{2}{*}{\textbf{Method}} &
        \multicolumn{4}{c}{\textbf{SVCT}} &
        \multicolumn{3}{c}{\textbf{LACT}}                                                                                                                                                                                              \\
        \cmidrule(lr){3-6}\cmidrule(lr){7-9}
                                          &                                  & \textbf{18}         & \textbf{36}         & \textbf{72}         & \textbf{144}        & \textbf{90}         & \textbf{120}        & \textbf{150}        \\
        \midrule
        \multirow{3}{*}{AAPM}
                                          & ProCT                            & 30.2957             & 33.1703             & 35.3471             & 37.2768             & \underline{31.8320} & 34.5075             & 37.0916             \\
                                          & DiffPIR                          & \underline{33.3762} & \underline{38.9284} & \underline{43.0297} & \underline{47.3725} & 28.8041             & \underline{34.5525} & \underline{43.2935} \\
                                          & DC-PnPDP (\textit{Ours})         & \textbf{36.4784}    & \textbf{40.5853}    & \textbf{43.9589}    & \textbf{48.1579}    & \textbf{33.8669}    & \textbf{39.2974}    & \textbf{45.0820}    \\
        \midrule
        \multirow{3}{*}{DeepLesion}
                                          & ProCT                            & 28.8959             & 30.9438             & 32.6598             & 33.8446             & \underline{31.5145} & 33.0290             & 34.0849             \\
                                          & DiffPIR                          & \underline{30.4603} & \underline{35.8175} & \underline{40.5997} & \underline{44.7531} & 27.8810             & \underline{34.7194} & \underline{43.1713} \\
                                          & DC-PnPDP (\textit{Ours})         & \textbf{33.7165}    & \textbf{38.0110}    & \textbf{41.9861}    & \textbf{45.7226}    & \textbf{33.0598}    & \textbf{39.3003}    & \textbf{45.4183}    \\
        \bottomrule
    \end{tabular}
\end{table}

The qualitative comparisons in Figures~\ref{fig:proct_visual_lact} and~\ref{fig:proct_visual_svct} provide complementary evidence. In LACT, where an entire angular wedge is missing, memoryless or purely feed-forward reconstructions can exhibit geometrically structured errors rather than only local blur. DC-PnPDP reduces these structured residuals in the zoomed regions and error maps, suggesting that the dual feedback is most beneficial when the inverse problem induces coherent, physically structured artifacts.

\begin{figure}[htbp]
    \centering
    \includegraphics[width=0.86\textwidth]{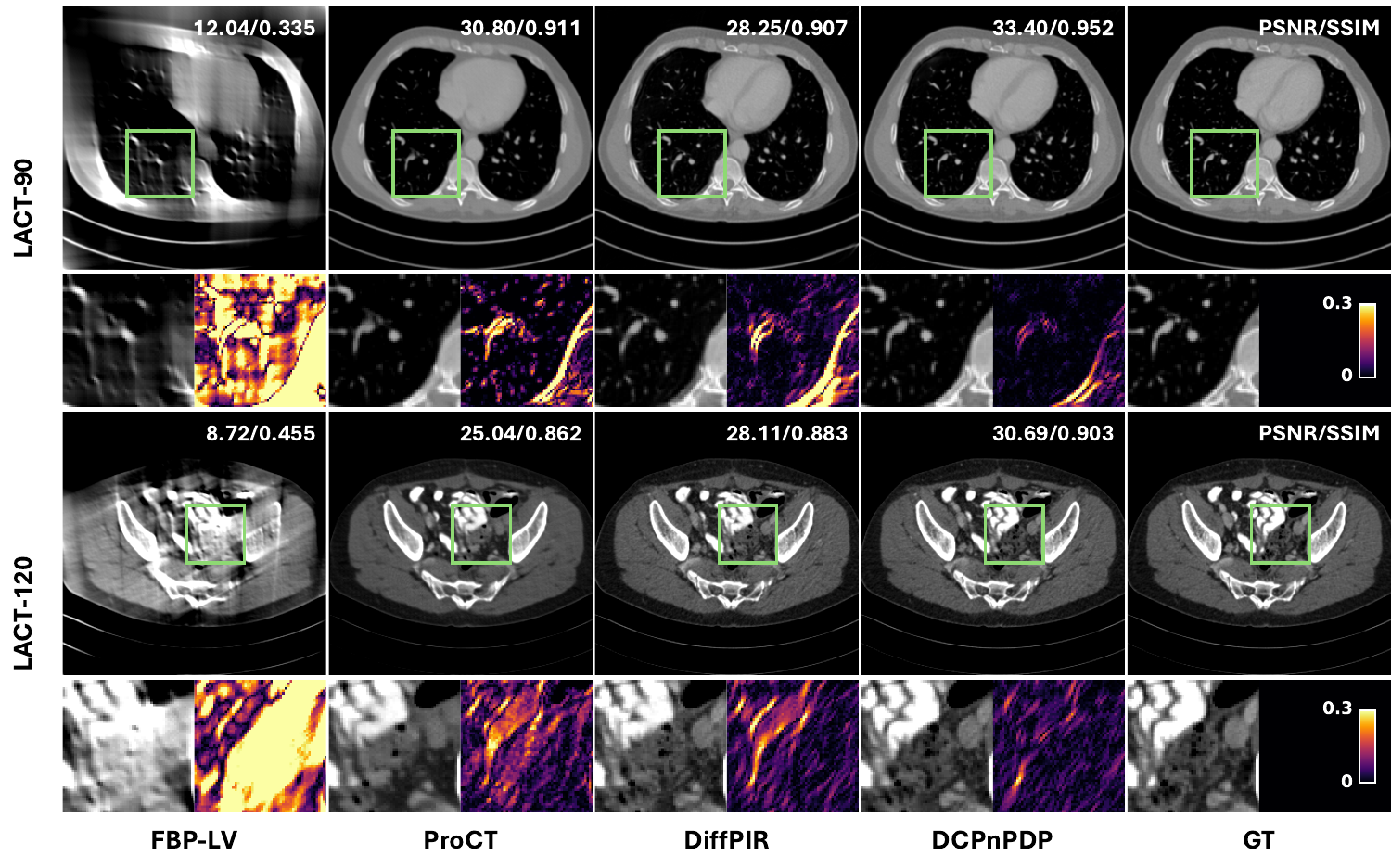}
    \caption{Additional qualitative comparison with ProCT on LACT-90 and LACT-120. The figure includes reconstruction results, zoomed-in regions, and error maps, with FBP-LV and ground truth shown for reference.}
    \label{fig:proct_visual_lact}
\end{figure}

\begin{figure}[htbp]
    \centering
    \includegraphics[width=0.86\textwidth]{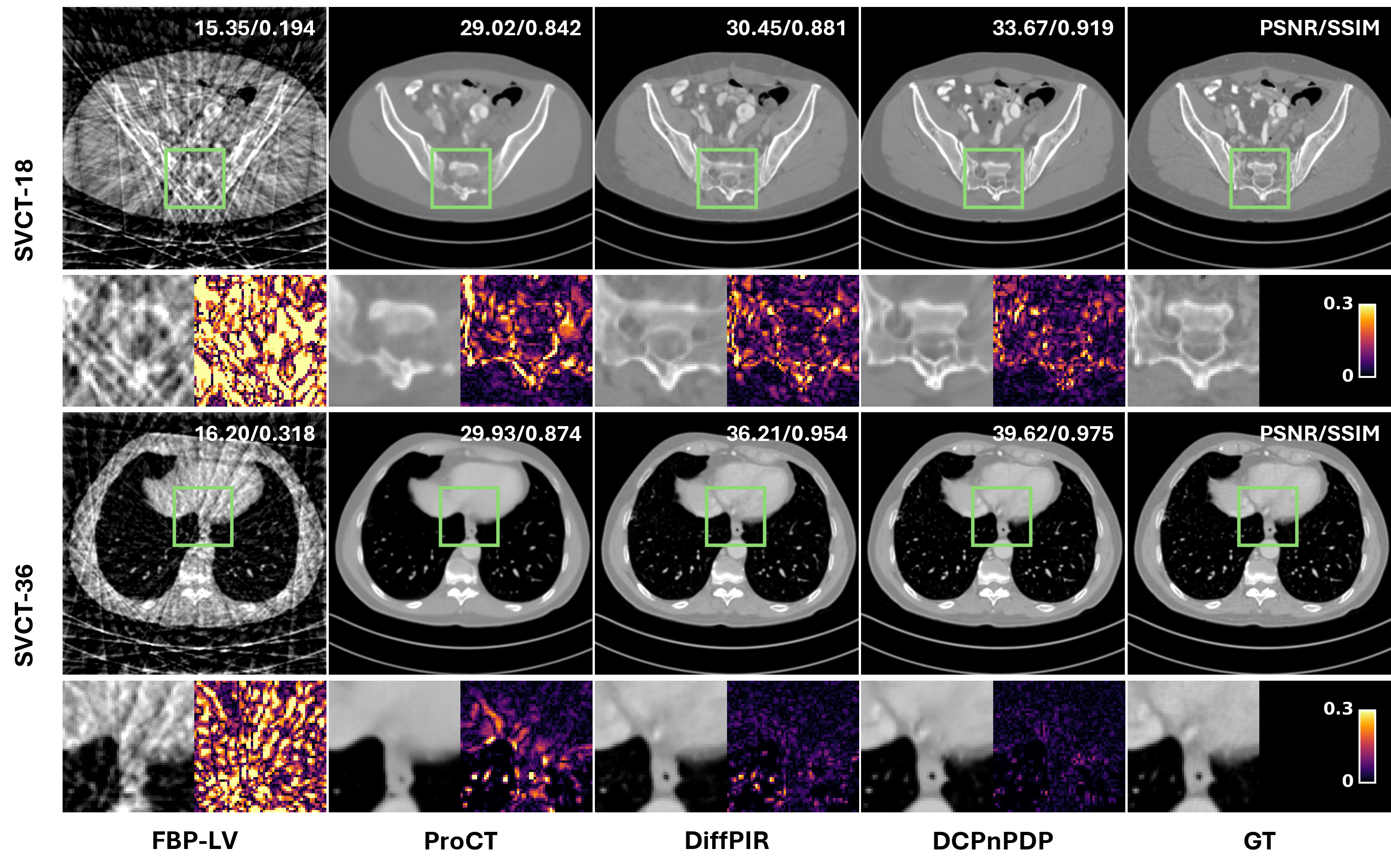}
    \caption{Additional qualitative comparison with ProCT on SVCT-18 and SVCT-36. The figure includes reconstruction results, zoomed-in regions, and error maps, with FBP-LV and ground truth shown for reference.}
    \label{fig:proct_visual_svct}
\end{figure}
\clearpage

\subsection{Natural-Image Inverse Problems}
Although the paper is motivated by CT and MRI, the memoryless-bias issue is not specific to medical imaging. It arises whenever a PnP solver repeatedly alternates a learned prior with a data-consistency step without retaining the accumulated constraint mismatch. To test whether the proposed dual-coupled mechanism transfers beyond the medical setting, we evaluate standard natural-image linear inverse problems: Gaussian deblurring, motion deblurring, and $4{\times}$ super-resolution.

Table~\ref{tab:natural_image_results} and Figure~\ref{fig:natural_image_results} compare DC-PnPDP with DAPS and DiffPIR on these tasks. DC-PnPDP achieves the best PSNR and SSIM on all three inverse problems, indicating that the dual-coupled update is not tied to a particular CT or MRI forward model. LPIPS is reported for compatibility with natural-image restoration benchmarks, but it should be interpreted together with the perception-distortion trade-off: sampling-heavy methods may sometimes produce more perceptually favored textures, while a stricter data-consistency mechanism tends to favor faithful recovery under the measurement model.

\begin{table}[!htbp]
    \centering
    \caption{Quantitative natural-image restoration results on Gaussian blur, motion blur, and $4{\times}$ super-resolution. Results are reported as mean $\pm$ standard deviation; the best results are highlighted in \textbf{bold} and the second-best are \underline{underlined}.}
    \label{tab:natural_image_results}
    \footnotesize
    \setlength{\tabcolsep}{5pt}
    \begin{tabular}{llccc}
        \toprule
        \textbf{Task}                   & \textbf{Method} & \textbf{PSNR} $\uparrow$       & \textbf{SSIM} $\uparrow$      & \textbf{LPIPS} $\downarrow$   \\
        \midrule
        \multirow{3}{*}{Gaussian Blur}  & DAPS            & 29.609 $\pm$ 1.637             & 0.786 $\pm$ 0.031             & 0.180 $\pm$ 0.030             \\
                                        & DiffPIR         & \underline{30.506 $\pm$ 1.940} & \underline{0.858 $\pm$ 0.038} & \textbf{0.145 $\pm$ 0.029}    \\
                                        & DC-PnPDP        & \textbf{30.741 $\pm$ 2.011}    & \textbf{0.861 $\pm$ 0.037}    & \underline{0.165 $\pm$ 0.031} \\
        \midrule
        \multirow{3}{*}{Motion Blur}    & DAPS            & \underline{31.653 $\pm$ 1.337} & \underline{0.836 $\pm$ 0.021} & 0.138 $\pm$ 0.027             \\
                                        & DiffPIR         & 31.088 $\pm$ 1.451             & 0.831 $\pm$ 0.026             & \underline{0.129 $\pm$ 0.023} \\
                                        & DC-PnPDP        & \textbf{33.305 $\pm$ 1.622}    & \textbf{0.904 $\pm$ 0.023}    & \textbf{0.116 $\pm$ 0.027}    \\
        \midrule
        \multirow{3}{*}{$4{\times}$ SR} & DAPS            & 29.363 $\pm$ 1.492             & 0.782 $\pm$ 0.029             & 0.193 $\pm$ 0.033             \\
                                        & DiffPIR         & \underline{30.330 $\pm$ 1.809} & \underline{0.858 $\pm$ 0.035} & \textbf{0.154 $\pm$ 0.028}    \\
                                        & DC-PnPDP        & \textbf{30.807 $\pm$ 1.862}    & \textbf{0.866 $\pm$ 0.033}    & \underline{0.158 $\pm$ 0.029} \\
        \bottomrule
    \end{tabular}
\end{table}

\begin{figure}[!htbp]
    \centering
    \includegraphics[width=1\textwidth]{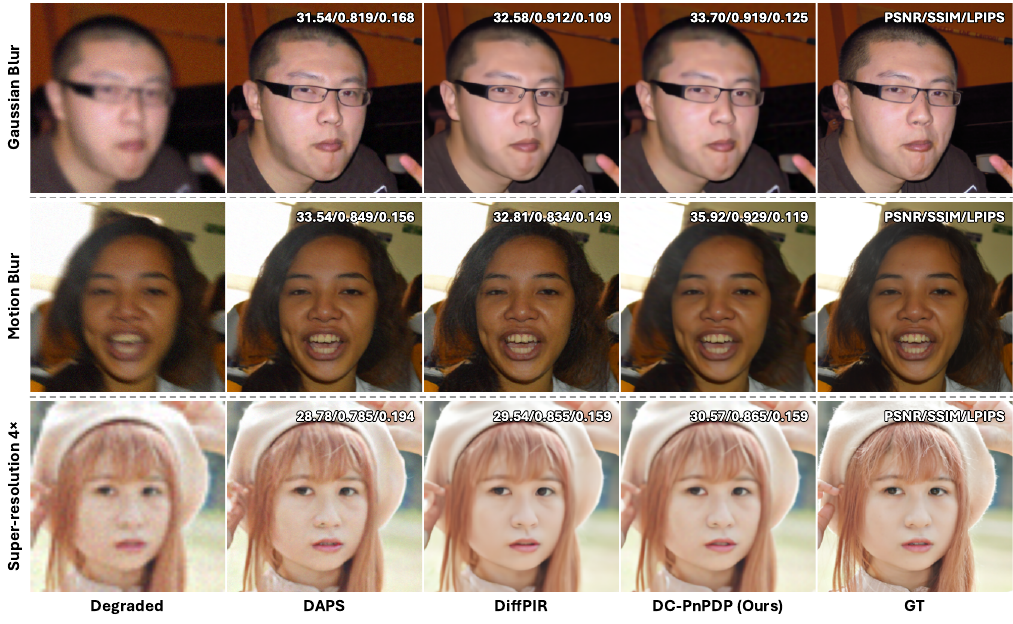}
    \caption{Additional qualitative examples for natural-image inverse problems. The figure compares restoration results under Gaussian blur, motion blur, and $4{\times}$ super-resolution.}
    \label{fig:natural_image_results}
\end{figure}
\clearpage

\paragraph{Applying SH to Other PnP Solvers.}
The SH module is designed to address a statistical compatibility problem: after a data-consistency step, the residual fed to the diffusion prior can be spectrally colored rather than AWGN-like. This issue can occur in other PnP solvers as well, even when they do not maintain an explicit ADMM dual variable. As a lightweight diagnostic, we inserted SH into DDNM by replacing the standard DDIM re-noising step with the proposed spectral correction. On SVCT-20, this modification improves PSNR from 34.82 to 35.07.

This gain is intentionally reported as a scope check rather than as a new method variant. SH is most useful when the solver creates severe structured residuals, which is precisely what can happen in DC-PnPDP because the dual state accumulates historical constraint violations. Methods without an explicit memory variable typically produce a milder distribution shift, so the isolated benefit of SH is naturally smaller. This observation supports the design choice of using SH together with, rather than instead of, dual coupling.

\section{Justification of Spectral Residual Estimation via Bootstrapping}
\label{app:spectral_justification}

In Section \ref{sec:sh}, we approximate the true solver-induced residual using a bootstrap strategy, defined as $\mathbf{r}^{(k+1)} = \mathbf{v}^{(k+1)} - \mathbf{z}^{(k)}$. A natural concern is whether the estimation error in the pilot estimate $\mathbf{z}^{(k)}$ (especially in early iterations) compromises the accuracy of the spectral diagnosis. Here, we provide a detailed analysis of why this approximation holds robustly in the context of inverse problems.

\subsection{Error Decomposition}
Let $\mathbf{x}_{\text{GT}}$ denote the ground truth image. The \textit{true} solver residual containing the physics-based artifacts is $\mathbf{r}_{\text{true}} = \mathbf{v}^{(k+1)} - \mathbf{x}_{\text{GT}}$.
Our empirical estimate $\mathbf{r}^{(k+1)}$ can be decomposed as:
\begin{align}
    \mathbf{r}^{(k+1)} & = \mathbf{v}^{(k+1)} - \mathbf{z}^{(k)} \nonumber                                                                                                  \\
                       & = (\mathbf{v}^{(k+1)} - \mathbf{x}_{\text{GT}}) + (\mathbf{x}_{\text{GT}} - \mathbf{z}^{(k)}) \nonumber                                            \\
                       & = \underbrace{\mathbf{r}_{\text{true}}}_{\text{Structured Artifacts}} + \underbrace{\boldsymbol{\epsilon}_{\text{est}}}_{\text{Estimation Error}},
\end{align}
where $\boldsymbol{\epsilon}_{\text{est}} = \mathbf{x}_{\text{GT}} - \mathbf{z}^{(k)}$ represents the denoising error of the previous step.
The validity of our Spectral Homogenization relies on the condition that the spectral signature of $\mathbf{r}_{\text{true}}$ is distinguishable from that of $\boldsymbol{\epsilon}_{\text{est}}$.

\subsection{Spectral Separability Analysis}
We analyze the properties of these two terms in the frequency domain:

\textbf{1. Structured Artifacts ($\mathbf{r}_{\text{true}}$): High Spectral Sparsity.}
In ill-posed inverse problems, the artifacts in $\mathbf{v}^{(k+1)}$ (derived from $\mathbf{x}^{(k+1)} + \mathbf{u}^{(k)}$) are governed by the geometry of the forward operator $\mathbf{A}$.
\begin{itemize}
    \item For \textbf{Sparse-View CT}, artifacts manifest as streaking patterns radiating from high-contrast regions. These correspond to energy concentration in a small set of angular frequency bands (a \textit{sparse} support in Fourier space).
    \item For \textbf{Accelerated MRI}, artifacts manifest as periodic aliasing (Cartesian undersampling). This results in sharp, coherent peaks in the spectrum.
\end{itemize}
Mathematically, the PSD $S_{\mathbf{r}_{\text{true}}}(\boldsymbol{\omega})$ exhibits high kurtosis, with energy concentrated in narrow bandwidths $\Omega_{\text{art}}$:
\begin{equation}
    S_{\mathbf{r}_{\text{true}}}(\boldsymbol{\omega}) \gg 0, \quad \forall \boldsymbol{\omega} \in \Omega_{\text{art}}.
\end{equation}

\textbf{2. Estimation Error ($\boldsymbol{\epsilon}_{\text{est}}$): Spectral Diffusion.}
The term $\boldsymbol{\epsilon}_{\text{est}}$ represents the residual noise that the diffusion denoiser failed to remove or the details it failed to recover.
Modern diffusion models (like DDPM/EDM) are trained to predict Gaussian noise. Consequently, their prediction errors tend to be incoherent and broadband (spectrally flat or slowly varying), lacking the strong geometric directionality of the physics artifacts.
Thus, $S_{\boldsymbol{\epsilon}_{\text{est}}}(\boldsymbol{\omega})$ acts as a low-level, broadband background floor.

\subsection{The Dominance Condition}
Spectral Homogenization effectively detects artifacts when the local Signal-to-Noise Ratio (SNR) in the frequency domain is high. Even if the total $L_2$ error of the estimate is large (i.e., $\|\boldsymbol{\epsilon}_{\text{est}}\|_2^2 \approx \|\mathbf{r}_{\text{true}}\|_2^2$), the peak energy density differs by orders of magnitude due to the energy compaction property of the artifacts:
\begin{equation}
    \max_{\boldsymbol{\omega}} S_{\mathbf{r}_{\text{true}}}(\boldsymbol{\omega}) \gg \max_{\boldsymbol{\omega}} S_{\boldsymbol{\epsilon}_{\text{est}}}(\boldsymbol{\omega}).
\end{equation}
By computing the PSD, we effectively project the signal onto a basis where the artifacts are sparse and the error is dense. This allows the smoothing kernel $\mathcal{K}_\delta$ (Eq. \ref{eq:psd_est}) to capture the envelope of the artifacts $S_{\mathbf{r}_{\text{true}}}$ while averaging out the incoherent fluctuations of $S_{\boldsymbol{\epsilon}_{\text{est}}}$. Therefore, the resulting mask $\Delta S(\boldsymbol{\omega})$ correctly targets the artifacts without being significantly perturbed by the imperfections of $\mathbf{z}^{(k)}$.

\section{Fixed-Point Characterization and Bias Analysis}
\label{sec:convergence_analysis}

While establishing global convergence guarantees for non-convex PnP algorithms with stochastic diffusion priors remains an open theoretical challenge, we can rigorously analyze the properties of the algorithm's \textit{fixed points}. This analysis reveals why the dual coupling is essential for unbiased reconstruction.

Let $(\mathbf{x}^*, \mathbf{z}^*, \mathbf{u}^*)$ denote a fixed point of the iteration, assuming the spectral homogenization acts as a zero-mean perturbation in expectation at steady state.

\begin{theorem}[Optimality of Dual-Coupled Fixed Points]
    \label{thm:fixed_point}
    Assume the denoiser $\mathcal{D}_\sigma$ acts as a proximal operator for some implicit regularizer $\phi(\cdot)$, i.e., $\mathcal{D}_\sigma(\mathbf{v}) = \text{prox}_{\gamma \phi}(\mathbf{v})$.
    1. The fixed point $(\mathbf{x}^*, \mathbf{z}^*, \mathbf{u}^*)$ of the proposed Dual-Coupled PnP ADMM satisfies the first-order optimality condition of the original problem:
    \begin{equation}
        \mathbf{0} \in \nabla f(\mathbf{x}^*) + \partial \phi(\mathbf{x}^*).
    \end{equation}
    2. Conversely, a PnP scheme that discards the dual variable (setting $\mathbf{u} \equiv \mathbf{0}$) converges to a biased solution $\tilde{\mathbf{x}}$ satisfying:
    \begin{equation}
        \mathbf{0} \in \nabla f(\tilde{\mathbf{x}}) + \partial \phi(\tilde{\mathbf{x}}) + \underbrace{(\tilde{\mathbf{x}} - \mathcal{D}_\sigma(\tilde{\mathbf{x}}))}_{\text{Systematic Bias}}.
    \end{equation}
\end{theorem}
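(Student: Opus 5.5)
The plan is to write down the stationarity conditions of each sub-step at a fixed point and eliminate variables. Following the stated convention, I treat the Spectral Homogenization perturbation as zero-mean at steady state, so in expectation the $\mathbf{z}$-update is $\mathbf{z}^* = \mathcal{D}_\sigma(\mathbf{x}^*+\mathbf{u}^*) = \text{prox}_{\gamma\phi}(\mathbf{x}^*+\mathbf{u}^*)$. I will also fix the scaling once. Constants in $f$ (the factor $\tfrac12$ and $\mathbf{W}$ versus the unweighted form in Algorithm~\ref{alg:main}) are absorbed into $\rho$. The prox parameter is matched to the penalty as in Eq.~\eqref{eq:admm_z}, i.e.\ $\gamma = 1/\rho$ with $\phi = \lambda\mathcal{R}$ up to that normalization. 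The claimed inclusions hold exactly only under this normalization; otherwise $\partial\phi$ appears multiplied by $\rho\gamma$.

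\textbf{Part 1.} First I use the dual update Eq.~\eqref{eq:admm_u} at a fixed point: $\mathbf{u}^* = \mathbf{u}^* + \mathbf{x}^* - \mathbf{z}^*$, hence $\mathbf{x}^* = \mathbf{z}^*$ (consensus). Second, the first-order condition of the strongly convex $\mathbf{x}$-subproblem Eq.~\eqref{eq:admm_x} gives $\nabla f(\mathbf{x}^*) + \rho(\mathbf{x}^* - \mathbf{z}^* + \mathbf{u}^*) = \mathbf{0}$, i.e.\ $\nabla f(\mathbf{x}^*) = -\rho\mathbf{u}^*$. Third, the prox characterization $\mathbf{z}^* = \text{prox}_{\gamma\phi}(\mathbf{w}) \iff (\mathbf{w}-\mathbf{z}^*)/\gamma \in \partial\phi(\mathbf{z}^*)$, applied with $\mathbf{w} = \mathbf{x}^*+\mathbf{u}^*$ and $\mathbf{z}^* = \mathbf{x}^*$, yields $\mathbf{u}^*/\gamma \in \partial\phi(\mathbf{x}^*)$. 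With $\gamma = 1/\rho$ this reads $\rho\mathbf{u}^* \in \partial\phi(\mathbf{x}^*)$. Adding this to the second step gives $\mathbf{0} \in \nabla f(\mathbf{x}^*) + \partial\phi(\mathbf{x}^*)$. The point to stress is that $\mathbf{u}^*$ is exactly the Lagrange multiplier carrying $\partial\phi$. Consensus is what allows the subgradient to be evaluated at the same point as $\nabla f$.

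\textbf{Part 2.} With $\mathbf{u}\equiv\mathbf{0}$, a fixed point $(\tilde{\mathbf{x}},\tilde{\mathbf{z}})$ satisfies $\tilde{\mathbf{z}} = \mathcal{D}_\sigma(\tilde{\mathbf{x}})$ and $\nabla f(\tilde{\mathbf{x}}) + \rho(\tilde{\mathbf{x}} - \tilde{\mathbf{z}}) = \mathbf{0}$. There is no integrator to force $\tilde{\mathbf{x}} = \tilde{\mathbf{z}}$. The prox characterization gives $\rho(\tilde{\mathbf{x}} - \tilde{\mathbf{z}}) \in \partial\phi(\tilde{\mathbf{z}})$, so $\mathbf{0} \in \nabla f(\tilde{\mathbf{x}}) + \partial\phi(\tilde{\mathbf{z}})$. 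Here the two optimality terms are evaluated at different points, and $\tilde{\mathbf{x}}-\tilde{\mathbf{z}} = \tilde{\mathbf{x}} - \mathcal{D}_\sigma(\tilde{\mathbf{x}})$ is generically nonzero, since it equals $-\nabla f(\tilde{\mathbf{x}})/\rho$. To reach the stated form I transport the subgradient from $\tilde{\mathbf{z}}$ to $\tilde{\mathbf{x}}$ by writing $\partial\phi(\tilde{\mathbf{z}}) = \partial\phi(\tilde{\mathbf{x}}) + [\partial\phi(\tilde{\mathbf{z}}) - \partial\phi(\tilde{\mathbf{x}})]$. Assuming $\phi$ is smooth with $L$-Lipschitz gradient near $\tilde{\mathbf{x}}$, the bracket has norm at most $L\|\tilde{\mathbf{x}} - \mathcal{D}_\sigma(\tilde{\mathbf{x}})\|$. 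This leaves a residual term proportional to and controlled by $\tilde{\mathbf{x}} - \mathcal{D}_\sigma(\tilde{\mathbf{x}})$. Equivalently, $\tilde{\mathbf{x}}$ is stationary for the perturbed objective $f + \phi + \tfrac{\rho}{2}\|\cdot - \mathcal{D}_\sigma(\cdot)\|^2$ to first order. This is the systematic bias term in the statement, with the constant normalized to one. It vanishes only if $\tilde{\mathbf{x}}$ is already a fixed point of the denoiser, i.e.\ only if $\nabla f(\tilde{\mathbf{x}}) = \mathbf{0}$.

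\textbf{Main obstacle.} The hard step is this last transport of $\partial\phi$ from $\tilde{\mathbf{z}}$ to $\tilde{\mathbf{x}}$. The exact derivation naturally produces $\partial\phi(\tilde{\mathbf{z}})$, and the stated inclusion with a unit-coefficient bias term is an identification up to a first-order (Lipschitz) remainder, not an identity. I would present the exact relation $\mathbf{0} \in \nabla f(\tilde{\mathbf{x}}) + \partial\phi(\mathcal{D}_\sigma(\tilde{\mathbf{x}}))$ together with $\tilde{\mathbf{x}} - \mathcal{D}_\sigma(\tilde{\mathbf{x}}) = -\nabla f(\tilde{\mathbf{x}})/\rho \neq \mathbf{0}$ as the rigorous content. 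The displayed form in the theorem would then follow under the smoothness assumption, with the bias term absorbing the gradient mismatch.
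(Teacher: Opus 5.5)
Your Part~1 follows the paper's argument step for step: consensus from the dual update, then $\nabla f(\mathbf{x}^*)=-\rho\mathbf{u}^*$ from the $\mathbf{x}$-subproblem, then $\mathbf{u}^*\in\gamma\,\partial\phi(\mathbf{x}^*)$ from the prox inclusion. Your explicit choice $\gamma=1/\rho$ is what the paper calls ``setting parameters consistently''; its appendix version states the result with $\lambda=\rho\gamma$.

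\textbf{Part~2: your route is sharper than the paper's.} The paper stops at $\nabla f(\tilde{\mathbf{x}})+\rho(\tilde{\mathbf{x}}-\mathcal{D}_\sigma(\tilde{\mathbf{x}}))=\mathbf{0}$. It then argues informally that $\rho(\tilde{\mathbf{x}}-\mathcal{D}_\sigma(\tilde{\mathbf{x}}))$ is the prior gradient only if $\tilde{\mathbf{x}}$ is already perfectly denoised, and never derives the displayed inclusion. You apply the prox characterization once more and obtain the exact relation $\mathbf{0}\in\nabla f(\tilde{\mathbf{x}})+\partial\phi(\mathcal{D}_\sigma(\tilde{\mathbf{x}}))$. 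This locates the bias precisely: the subgradient is evaluated at the wrong point. Under your smoothness assumption it is bounded by $L\|\tilde{\mathbf{x}}-\mathcal{D}_\sigma(\tilde{\mathbf{x}})\|=(L/\rho)\|\nabla f(\tilde{\mathbf{x}})\|$.

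\textbf{The displayed inclusion is false in general.} Your suspicion about the unit-coefficient form is right and can be stated more strongly. Take $\phi=\tfrac{a}{2}\|\cdot\|^2$ with $a>0$ and $\gamma=1/\rho$, so $\mathcal{D}_\sigma(\mathbf{v})=\tfrac{\rho}{\rho+a}\mathbf{v}$. Using the fixed-point relation, the right-hand side of the displayed inclusion equals $\tfrac{a(a+1)}{\rho+a}\tilde{\mathbf{x}}$. This is nonzero, e.g.\ for $\mathbf{A}=\mathbf{I}$, $\mathbf{y}\neq\mathbf{0}$.

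The same example shows the true bias is $-a(\tilde{\mathbf{x}}-\mathcal{D}_\sigma(\tilde{\mathbf{x}}))$, i.e.\ to first order $-\nabla^2\phi\,(\tilde{\mathbf{x}}-\mathcal{D}_\sigma(\tilde{\mathbf{x}}))$. So it is not ``proportional'' to the displayed term with a constant you can normalize to one. The factor is a matrix, and it has the opposite sign.

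\textbf{Your perturbed-objective aside is also wrong.} You claim $\tilde{\mathbf{x}}$ is first-order stationary for $f+\phi+\tfrac{\rho}{2}\|\cdot-\mathcal{D}_\sigma(\cdot)\|^2$. In the example, that added term contributes $+\tfrac{\rho a^2}{(\rho+a)^2}\tilde{\mathbf{x}}$, whereas the required correction is $-\tfrac{a^2}{\rho+a}\tilde{\mathbf{x}}$.

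The exact statement replaces $\phi$ by $\phi\circ\mathcal{D}_\sigma$. Define $e(\mathbf{x})=\phi(\mathcal{D}_\sigma(\mathbf{x}))+\tfrac{\rho}{2}\|\mathbf{x}-\mathcal{D}_\sigma(\mathbf{x})\|^2$, the Moreau envelope of $\phi$ with parameter $1/\rho$. For convex $\phi$ (or $\mu$-weakly convex $\phi$ with $\rho>\mu$), $\nabla e(\mathbf{x})=\rho(\mathbf{x}-\mathcal{D}_\sigma(\mathbf{x}))$. So the memoryless fixed-point equation is exactly $\nabla(f+e)(\tilde{\mathbf{x}})=\mathbf{0}$.

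You could therefore present Part~2 as: ``the dual-free scheme is stationary for the smoothed objective $f+e$, not for $f+\phi$.'' Together with your exact inclusion and Lipschitz bound, this gives a rigorous version of the systematic-bias claim, which neither the paper's heuristic nor the displayed formula provides.
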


\begin{proof}
    \textit{Part 1 (Proposed Method):} At a fixed point, $\mathbf{u}^{k+1} = \mathbf{u}^k$, which implies $\mathbf{x}^* - \mathbf{z}^* = \mathbf{0}$ via Eq. \eqref{eq:admm_u}, ensuring consensus $\mathbf{x}^* = \mathbf{z}^*$.
    The x-update \eqref{eq:admm_x} implies $\mathbf{0} = \nabla f(\mathbf{x}^*) + \rho(\mathbf{x}^* - \mathbf{z}^* + \mathbf{u}^*) \implies \nabla f(\mathbf{x}^*) = -\rho \mathbf{u}^*$.
    The z-update (prior step) implies $\mathbf{z}^* = \text{prox}_{\gamma \phi}(\mathbf{x}^* + \mathbf{u}^*)$. By definition of the proximal operator, this yields $\mathbf{x}^* + \mathbf{u}^* - \mathbf{z}^* \in \gamma \partial \phi(\mathbf{z}^*)$. Substituting $\mathbf{x}^*=\mathbf{z}^*$, we get $\mathbf{u}^* \in \gamma \partial \phi(\mathbf{x}^*)$.
    Combining these, $\nabla f(\mathbf{x}^*) = -\rho \mathbf{u}^* \in -\rho \gamma \partial \phi(\mathbf{x}^*)$. Setting parameters consistently recovers $\mathbf{0} \in \nabla f + \partial \phi$.

    \textit{Part 2 (Loose Coupling / No Dual):} If $\mathbf{u}$ is omitted, the updates effectively become alternating minimization without memory. The x-update becomes $\nabla f(\tilde{\mathbf{x}}) + \rho(\tilde{\mathbf{x}} - \tilde{\mathbf{z}}) = 0$. The z-update becomes $\tilde{\mathbf{z}} = \mathcal{D}_\sigma(\tilde{\mathbf{x}})$.
    Substituting $\tilde{\mathbf{z}}$, we get $\nabla f(\tilde{\mathbf{x}}) + \rho(\tilde{\mathbf{x}} - \mathcal{D}_\sigma(\tilde{\mathbf{x}})) = 0$.
    The term $\rho(\tilde{\mathbf{x}} - \mathcal{D}_\sigma(\tilde{\mathbf{x}}))$ represents the gradient of the prior $\nabla \phi(\tilde{\mathbf{x}})$ \textit{only if} $\tilde{\mathbf{x}}$ is already a perfect denoised image. However, in practice, $\tilde{\mathbf{x}}$ contains residual noise. The discrepancy leads to a non-vanishing gradient term, creating a permanent fixed-point bias where data consistency fights against the denoiser without the buffering capacity of the dual variable to absorb the residual.
\end{proof}

\section{Theoretical Analysis of Fixed-Point Properties}
\label{app:theory}

In this section, we provide a rigorous analysis of the fixed-point characterization of the proposed Dual-Coupled PnP framework. While establishing global convergence sequences for non-convex, stochastic PnP algorithms remains an open problem in the field, analyzing the \textit{fixed points} (i.e., the steady states) of the algorithm provides critical insights into the correctness of the solution.

Specifically, we prove that the explicit inclusion of the dual variable enables the framework to converge to a stationary point of the original variational problem, whereas standard PnP-HQS methods (which omit the dual variable) suffer from a systematic bias.

\subsection{Preliminaries and Assumptions}

We consider the composite optimization problem:
\begin{equation}
    \min_{\mathbf{x}} \quad f(\mathbf{x}) + \lambda \phi(\mathbf{x}),
    \label{eq:app_prob}
\end{equation}
where $f(\mathbf{x}) = \frac{1}{2}\|\mathbf{A}\mathbf{x} - \mathbf{y}\|_\mathbf{W}^2$ is the data-fidelity term, and $\phi(\mathbf{x})$ is an implicit regularizer induced by the image prior.

To connect the denoiser to the optimization objective, we adopt the standard assumption in PnP literature regarding the relationship between the denoiser and the proximal operator.

\begin{assumption}[Denoiser as Proximal Operator]
    \label{asm:prox}
    We assume that the pretrained denoiser $\mathcal{D}_\sigma$ acts locally as the proximal operator of the regularizer $\phi$ with parameter $\gamma > 0$:
    \begin{equation}
        \mathcal{D}_\sigma(\mathbf{v}) \approx \text{prox}_{\gamma \phi}(\mathbf{v}) \triangleq \arg\min_{\mathbf{z}} \left( \frac{1}{2}\|\mathbf{z} - \mathbf{v}\|_2^2 + \gamma \phi(\mathbf{z}) \right).
    \end{equation}
    For diffusion models, this is motivated by Tweedie's formula, which links the MMSE denoiser to the score function $\nabla \log p(\mathbf{x})$.
\end{assumption}

\subsection{Theorem 1: Optimality of Dual-Coupled Fixed Points}

\begin{theorem}
    Let $(\mathbf{x}^*, \mathbf{z}^*, \mathbf{u}^*)$ be a fixed point of the deterministic backbone of the proposed Dual-Coupled PnP ADMM iterations. Under Assumption \ref{asm:prox}, $\mathbf{x}^*$ satisfies the first-order optimality condition of the original problem \eqref{eq:app_prob} with effective regularization weight $\lambda = \rho \gamma$.
\end{theorem}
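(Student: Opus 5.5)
We read the fixed point off each of the three update equations \eqref{eq:admm_x}--\eqref{eq:admm_u} of the deterministic backbone. The backbone is the iteration with the Spectral Homogenization perturbation removed; by the standing hypothesis, that perturbation is zero-mean at steady state. We then combine the three resulting conditions. Throughout, Assumption~\ref{asm:prox} is taken with equality, $\mathcal{D}_\sigma = \text{prox}_{\gamma\phi}$. We write $f(\mathbf{x}) = \frac{1}{2}\|\mathbf{A}\mathbf{x}-\mathbf{y}\|_\mathbf{W}^2$, so $f$ is convex and differentiable with $\nabla f(\mathbf{x}) = \mathbf{A}^\top\mathbf{W}(\mathbf{A}\mathbf{x}-\mathbf{y})$.

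\emph{Step 1 (consensus).} Substituting the fixed point into the dual update \eqref{eq:admm_u} gives $\mathbf{u}^* = \mathbf{u}^* + (\mathbf{x}^* - \mathbf{z}^*)$. Hence $\mathbf{x}^* = \mathbf{z}^*$.

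\emph{Step 2 (data step).} The $\mathbf{x}$-subproblem in \eqref{eq:admm_x} is strongly convex and smooth, so its unique minimizer is characterized by the vanishing gradient:
\[
\mathbf{0} = \nabla f(\mathbf{x}^*) + \rho(\mathbf{x}^* - \mathbf{z}^* + \mathbf{u}^*).
\]
Using Step 1, this becomes $\nabla f(\mathbf{x}^*) = -\rho\,\mathbf{u}^*$.

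\emph{Step 3 (prior step).} Here $\mathbf{z}^* = \text{prox}_{\gamma\phi}(\mathbf{x}^* + \mathbf{u}^*)$. By Fermat's rule for the proximal objective, $\mathbf{x}^* + \mathbf{u}^* - \mathbf{z}^* \in \gamma\,\partial\phi(\mathbf{z}^*)$. Using Step 1 again, $\mathbf{u}^* \in \gamma\,\partial\phi(\mathbf{x}^*)$.

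\emph{Step 4 (combine).} Multiply the inclusion from Step 3 by $\rho$ and add it to the relation from Step 2. This gives $\mathbf{0} \in \nabla f(\mathbf{x}^*) + \rho\gamma\,\partial\phi(\mathbf{x}^*)$, which is the first-order condition for \eqref{eq:app_prob} with $\lambda = \rho\gamma$.

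The main obstacle is Step 3, since $\phi$ need not be convex. In that case we interpret $\partial\phi$ as the limiting (Fréchet/Mordukhovich) subdifferential. For a proper lower semicontinuous $\phi$, the necessary condition $\mathbf{v} - \mathbf{z} \in \gamma\,\partial\phi(\mathbf{z})$ still holds at any minimizer $\mathbf{z}$ of $\frac{1}{2}\|\mathbf{z} - \mathbf{v}\|^2 + \gamma\phi(\mathbf{z})$, because the quadratic term is smooth and the sum rule applies. Consequently, the conclusion should be read as stationarity in the limiting sense rather than global optimality. A second subtlety is the $\approx$ in Assumption~\ref{asm:prox}, which means the equality holds only locally. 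We therefore state the theorem under exact equality in a neighbourhood of $\mathbf{x}^* + \mathbf{u}^*$; this suffices because Fermat's rule is local. We also note that $\rho$ and $\gamma$ (which is tied to $\sigma$) are held fixed at the fixed point.
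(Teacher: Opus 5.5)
Your proof is correct and follows the paper's argument step for step. You use consensus from the dual update, then the stationarity of the $\mathbf{x}$-subproblem to get $\nabla f(\mathbf{x}^*) = -\rho\,\mathbf{u}^*$, then Fermat's rule for the proximal map (with Assumption~\ref{asm:prox}) to get $\mathbf{u}^* \in \gamma\,\partial\phi(\mathbf{x}^*)$, and combine them to obtain $\mathbf{0} \in \nabla f(\mathbf{x}^*) + \rho\gamma\,\partial\phi(\mathbf{x}^*)$; your remarks on the limiting subdifferential for nonconvex $\phi$ and on needing the denoiser/prox equality only at $\mathbf{x}^*+\mathbf{u}^*$ make explicit what the paper leaves implicit.
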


\begin{proof}
    The deterministic updates of our ADMM formulation are:
    \begin{align}
        \mathbf{x}^{k+1} & := \arg\min_{\mathbf{x}} f(\mathbf{x}) + \frac{\rho}{2}\|\mathbf{x} - \mathbf{z}^k + \mathbf{u}^k\|_2^2 \label{eq:app_x_update} \\
        \mathbf{z}^{k+1} & := \mathcal{D}_\sigma(\mathbf{x}^{k+1} + \mathbf{u}^k) \label{eq:app_z_update}                                                  \\
        \mathbf{u}^{k+1} & := \mathbf{u}^k + \mathbf{x}^{k+1} - \mathbf{z}^{k+1} \label{eq:app_u_update}
    \end{align}
    At a fixed point $(\mathbf{x}^*, \mathbf{z}^*, \mathbf{u}^*)$, we have $\mathbf{x}^{k+1} = \mathbf{x}^k = \mathbf{x}^*$, etc.

    \textbf{Step 1: Consensus.}
    From the dual update \eqref{eq:app_u_update}, stationarity implies:
    \begin{equation}
        \mathbf{u}^* = \mathbf{u}^* + \mathbf{x}^* - \mathbf{z}^* \implies \mathbf{x}^* = \mathbf{z}^*.
        \label{eq:consensus}
    \end{equation}
    Thus, the primal and auxiliary variables reach perfect consensus.

    \textbf{Step 2: Data Consistency Condition.}
    From the x-update \eqref{eq:app_x_update}, the first-order optimality condition is:
    \begin{equation}
        \nabla f(\mathbf{x}^*) + \rho(\mathbf{x}^* - \mathbf{z}^* + \mathbf{u}^*) = \mathbf{0}.
    \end{equation}
    Using the consensus property $\mathbf{x}^* = \mathbf{z}^*$, this simplifies to:
    \begin{equation}
        \nabla f(\mathbf{x}^*) + \rho \mathbf{u}^* = \mathbf{0} \implies \nabla f(\mathbf{x}^*) = -\rho \mathbf{u}^*.
        \label{eq:grad_f}
    \end{equation}
    This physically implies that the dual variable $\mathbf{u}^*$ perfectly balances the gradient of the data-fidelity term.

    \textbf{Step 3: Prior Condition.}
    From the z-update \eqref{eq:app_z_update} and Assumption \ref{asm:prox}:
    \begin{equation}
        \mathbf{z}^* = \text{prox}_{\gamma \phi}(\mathbf{x}^* + \mathbf{u}^*).
    \end{equation}
    By the definition of the proximal operator, $\mathbf{z}^*$ must satisfy:
    \begin{equation}
        \mathbf{0} \in (\mathbf{z}^* - (\mathbf{x}^* + \mathbf{u}^*)) + \gamma \partial \phi(\mathbf{z}^*).
    \end{equation}
    Rearranging terms and utilizing $\mathbf{x}^* = \mathbf{z}^*$:
    \begin{equation}
        \mathbf{u}^* \in \gamma \partial \phi(\mathbf{x}^*).
        \label{eq:subgrad_phi}
    \end{equation}

    \textbf{Step 4: Global Optimality.}
    Combining \eqref{eq:grad_f} and \eqref{eq:subgrad_phi}, we substitute $\mathbf{u}^*$:
    \begin{equation}
        \nabla f(\mathbf{x}^*) \in -\rho (\gamma \partial \phi(\mathbf{x}^*)) \implies \mathbf{0} \in \nabla f(\mathbf{x}^*) + \rho \gamma \partial \phi(\mathbf{x}^*).
    \end{equation}
    Letting $\lambda = \rho \gamma$, this is exactly the optimality condition for $\min f(\mathbf{x}) + \lambda \phi(\mathbf{x})$.
\end{proof}

\subsection{Theorem 2: Systematic Bias in Loosely Coupled PnP}

\begin{theorem}
    Consider a ``Loose PnP'' scheme that omits the dual variable (equivalent to setting $\mathbf{u}^k \equiv \mathbf{0}$). Let $\tilde{\mathbf{x}}$ be a fixed point of this scheme. Then $\tilde{\mathbf{x}}$ satisfies a biased optimality condition:
    \begin{equation}
        \mathbf{0} \in \nabla f(\tilde{\mathbf{x}}) + \frac{\rho}{\sigma^2} \mathcal{E}_{bias}(\tilde{\mathbf{x}}),
    \end{equation}
    where the bias term depends on the residual noise of the denoiser output.
\end{theorem}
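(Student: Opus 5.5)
We write out the fixed-point equations of the memoryless scheme and then identify the denoiser residual with a score-type quantity via Tweedie's formula \eqref{eq:tweedie}. With $\mathbf{u}^k\equiv\mathbf{0}$, the iteration reduces to
\[
\mathbf{x}^{k+1}=\arg\min_{\mathbf{x}} f(\mathbf{x})+\tfrac{\rho}{2}\|\mathbf{x}-\mathbf{z}^k\|_2^2,
\qquad
\mathbf{z}^{k+1}=\mathcal{D}_\sigma(\mathbf{x}^{k+1}).
\]
At a fixed point $(\tilde{\mathbf{x}},\tilde{\mathbf{z}})$ the first-order condition of the $\mathbf{x}$-update gives $\nabla f(\tilde{\mathbf{x}})+\rho(\tilde{\mathbf{x}}-\tilde{\mathbf{z}})=\mathbf{0}$. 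The $\mathbf{z}$-update gives $\tilde{\mathbf{z}}=\mathcal{D}_\sigma(\tilde{\mathbf{x}})$. Unlike Step~1 of the proof of Theorem~1, there is no dual stationarity equation, so consensus $\tilde{\mathbf{x}}=\tilde{\mathbf{z}}$ cannot be derived. Substituting yields the exact identity
\[
\mathbf{0}=\nabla f(\tilde{\mathbf{x}})+\rho\bigl(\tilde{\mathbf{x}}-\mathcal{D}_\sigma(\tilde{\mathbf{x}})\bigr).
\]

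Next we rewrite the residual $\tilde{\mathbf{x}}-\mathcal{D}_\sigma(\tilde{\mathbf{x}})$ in the claimed form. By Tweedie's formula \eqref{eq:tweedie}, $\mathcal{D}_\sigma(\mathbf{v})=\mathbf{v}+\sigma^2\nabla\log p_\sigma(\mathbf{v})$. Hence $\tilde{\mathbf{x}}-\mathcal{D}_\sigma(\tilde{\mathbf{x}})=-\sigma^2\nabla\log p_\sigma(\tilde{\mathbf{x}})$, which is precisely $\sigma^2$ times the denoiser's residual-noise prediction at $\tilde{\mathbf{x}}$. Because $\mathcal{E}_{bias}$ carries the explicit prefactor $\rho/\sigma^2$, we define
\[
\mathcal{E}_{bias}(\tilde{\mathbf{x}}) := \sigma^2\bigl(\tilde{\mathbf{x}}-\mathcal{D}_\sigma(\tilde{\mathbf{x}})\bigr) = -\sigma^4\nabla\log p_\sigma(\tilde{\mathbf{x}}),
\]
so that $\frac{\rho}{\sigma^2}\mathcal{E}_{bias}(\tilde{\mathbf{x}})=\rho(\tilde{\mathbf{x}}-\mathcal{D}_\sigma(\tilde{\mathbf{x}}))$. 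Equivalently, one may write it as $\rho\,\hat{\mathbf{n}}_\sigma(\tilde{\mathbf{x}})$, where $\hat{\mathbf{n}}_\sigma$ is the predicted residual noise, and adjust the scaling to match the statement. This makes the dependence on the residual noise of the denoiser output explicit, and the inclusion $\mathbf{0}\in\nabla f(\tilde{\mathbf{x}})+\tfrac{\rho}{\sigma^2}\mathcal{E}_{bias}(\tilde{\mathbf{x}})$ holds trivially as an equality.

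The remaining and main step is to argue that this term is genuinely a bias, meaning it generally differs from $\lambda\,\partial\phi(\tilde{\mathbf{x}})$. Under Assumption~\ref{asm:prox}, the prox optimality condition gives $\tilde{\mathbf{x}}-\mathcal{D}_\sigma(\tilde{\mathbf{x}})\in\gamma\partial\phi(\mathcal{D}_\sigma(\tilde{\mathbf{x}}))$. This subgradient is evaluated at the denoised point $\tilde{\mathbf{z}}$ rather than at $\tilde{\mathbf{x}}$. The fixed point therefore satisfies
\[
\nabla f(\tilde{\mathbf{x}})\in-\rho\gamma\,\partial\phi(\tilde{\mathbf{z}}),
\]
which coincides with the optimality condition of Theorem~1 only when $\tilde{\mathbf{x}}=\tilde{\mathbf{z}}$, that is, when $\tilde{\mathbf{x}}$ is a fixed point of $\mathcal{D}_\sigma$. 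By Tweedie's formula that happens only if $\nabla\log p_\sigma(\tilde{\mathbf{x}})=\mathbf{0}$, a mode of the smoothed prior, which is nongeneric when $\nabla f(\tilde{\mathbf{x}})\neq\mathbf{0}$. This last comparison is the delicate point: it relies on the approximate prox interpretation and on $\mathcal{D}_\sigma$ not being idempotent. I would therefore present it as a characterization of when the bias vanishes rather than as a quantitative lower bound.
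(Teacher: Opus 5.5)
Your fixed-point identity $\mathbf{0}=\nabla f(\tilde{\mathbf{x}})+\rho(\tilde{\mathbf{x}}-\mathcal{D}_\sigma(\tilde{\mathbf{x}}))$ and the Tweedie rewriting are exactly the paper's first steps, so the displayed inclusion is established the same way. Two things differ from the paper.

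First, the paper sets $\mathcal{E}_{bias}=\tilde{\mathbf{x}}-\mathcal{D}_\sigma(\tilde{\mathbf{x}})$. By that same identity this carries coefficient $\rho$, not $\rho/\sigma^2$. The stated prefactor presumably matches only if $\rho$ is read as a DiffPIR-style noise-scaled penalty, as in the appendix's $\rho\propto 1/\sigma_t^2$. Your rescaled definition $\mathcal{E}_{bias}=\sigma^2(\tilde{\mathbf{x}}-\mathcal{D}_\sigma(\tilde{\mathbf{x}}))$ makes the statement literally true. Since $\mathcal{E}_{bias}$ is otherwise unspecified, that is all the statement asserts.

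Second, the two arguments for the word ``biased'' differ. The paper argues heuristically that the memoryless scheme feeds the denoiser an essentially clean input. Then $\mathcal{D}_\sigma$ either acts as the identity (collapse to maximum likelihood) or exerts a ``phantom force'' unrelated to the score. You instead use Assumption~\ref{asm:prox} to get $\nabla f(\tilde{\mathbf{x}})\in-\rho\gamma\,\partial\phi(\tilde{\mathbf{z}})$, so the prior subgradient is evaluated at the wrong point. Your route is more formal and needs no claims about out-of-distribution denoiser behaviour. For convex $\phi$ it says that loose fixed points are stationary points of $f+\rho\gamma M_\gamma\phi$ rather than of $f+\rho\gamma\phi$, where $M_\gamma\phi(\mathbf{x})=\min_{\mathbf{z}}\phi(\mathbf{z})+\frac{1}{2\gamma}\|\mathbf{x}-\mathbf{z}\|_2^2$ is the Moreau envelope.

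Your final characterization overreaches, however. The two conditions can coincide even when $\tilde{\mathbf{x}}\neq\tilde{\mathbf{z}}$. Take $f(x)=\frac12(x-5)^2$, $\phi=|\cdot|$ and $\rho=\gamma=1$. The loose iteration is $x=(5+z)/2$, $z=\mathrm{prox}_{|\cdot|}(x)=x-1$ (for $x>1$). It has fixed point $\tilde x=4$, $\tilde z=3$, and $\tilde x=4$ is exactly the minimizer of $f+|\cdot|$. The correct criterion is that the bias vanishes iff $(\tilde{\mathbf{x}}-\tilde{\mathbf{z}})/\gamma\in\partial\phi(\tilde{\mathbf{x}})$; you only know this quantity lies in $\partial\phi(\tilde{\mathbf{z}})$.

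Also, $\tilde{\mathbf{x}}=\tilde{\mathbf{z}}$ is not merely nongeneric when $\nabla f(\tilde{\mathbf{x}})\neq\mathbf{0}$. Your own identity $\nabla f(\tilde{\mathbf{x}})=-\rho(\tilde{\mathbf{x}}-\tilde{\mathbf{z}})$ makes it equivalent to $\nabla f(\tilde{\mathbf{x}})=\mathbf{0}$, which is precisely the paper's maximum-likelihood-collapse case. These corrections affect only the interpretive part of your proof, not the inclusion itself.
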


\begin{proof}
    Without the dual variable, the updates simplify to:
    \begin{align}
        \tilde{\mathbf{x}} & := \arg\min_{\mathbf{x}} f(\mathbf{x}) + \frac{\rho}{2}\|\mathbf{x} - \tilde{\mathbf{z}}\|_2^2 \\
        \tilde{\mathbf{z}} & := \mathcal{D}_\sigma(\tilde{\mathbf{x}})
    \end{align}
    The x-update implies:
    \begin{equation}
        \nabla f(\tilde{\mathbf{x}}) + \rho(\tilde{\mathbf{x}} - \tilde{\mathbf{z}}) = \mathbf{0}.
    \end{equation}
    Substituting $\tilde{\mathbf{z}} = \mathcal{D}_\sigma(\tilde{\mathbf{x}})$:
    \begin{equation}
        \nabla f(\tilde{\mathbf{x}}) + \rho(\tilde{\mathbf{x}} - \mathcal{D}_\sigma(\tilde{\mathbf{x}})) = \mathbf{0}.
        \label{eq:loose_fixed}
    \end{equation}
    Using Tweedie's formula approximation $\mathcal{D}_\sigma(\mathbf{x}) \approx \mathbf{x} + \sigma^2 \nabla \log p(\mathbf{x})$, the term in the parenthesis becomes:
    \begin{equation}
        \tilde{\mathbf{x}} - \mathcal{D}_\sigma(\tilde{\mathbf{x}}) \approx -\sigma^2 \nabla \log p(\tilde{\mathbf{x}}).
    \end{equation}
    Superficially, substituting this into \eqref{eq:loose_fixed} yields $\nabla f(\tilde{\mathbf{x}}) - \rho \sigma^2 \nabla \log p(\tilde{\mathbf{x}}) \approx \mathbf{0}$, which resembles the MAP condition.

    \textbf{However, a critical contradiction arises:} The diffusion denoiser $\mathcal{D}_\sigma(\cdot)$ is trained to estimate $\mathbf{x}_{clean}$ from a \textit{noisy} input $\mathbf{x}_{clean} + \mathbf{n}$. In the loose PnP scheme, the input to the denoiser is $\tilde{\mathbf{x}}$, which is the current \textit{clean} estimate of the image.
    \begin{itemize}
        \item If $\tilde{\mathbf{x}}$ is truly clean and noise-free, an ideal denoiser acts as an identity map: $\mathcal{D}_\sigma(\tilde{\mathbf{x}}) \approx \tilde{\mathbf{x}}$. In this case, \eqref{eq:loose_fixed} becomes $\nabla f(\tilde{\mathbf{x}}) \approx \mathbf{0}$, meaning the prior is effectively ignored, and the solution collapses to the Maximum Likelihood Estimate (overfitting to noisy measurements).
        \item If $\mathcal{D}_\sigma$ is not an identity map on clean data (which is true for real networks), it introduces a ``hallucination error'' or ``texture smoothing'' bias $\mathcal{E}_{bias} = \tilde{\mathbf{x}} - \mathcal{D}_\sigma(\tilde{\mathbf{x}})$. This term acts as a phantom force that does not correspond to the true score function of the prior distribution.
    \end{itemize}
    Consequently, $\tilde{\mathbf{x}}$ is not a stationary point of $f(\mathbf{x}) + \lambda \phi(\mathbf{x})$, but rather a compromise point dictated by the denoiser's behavior on out-of-distribution (clean) inputs.
\end{proof}

\subsection{Physical Interpretation: The Role of Integral Action}

The contrast between Theorem 1 and Theorem 2 can be understood through a control theory analogy.
\begin{itemize}
    \item \textit{Current PnPDP (No Dual)} acts as a \textbf{Proportional (P) Controller}. It tries to reduce the error between the data-consistency term and the prior term simply by pulling them together ($\rho(\mathbf{x} - \mathbf{z})$). Like any P-controller, it suffers from a ``steady-state error'' (bias) because a non-zero force (gradient) is required to maintain the balance between $f$ and $\phi$.
    \item \textit{Dual-Coupled PnPDP (Our Method)} acts as a \textbf{Proportional-Integral (PI) Controller}. The dual variable $\mathbf{u}$ integrates the discrepancy $(\mathbf{x} - \mathbf{z})$ over time. This integral action builds up a force that exactly cancels the gradient $\nabla f(\mathbf{x}^*)$ at the optimal solution, allowing the error $(\mathbf{x} - \mathbf{z})$ to settle to exactly zero. This ensures that the final result is physically consistent with both the measurements and the prior.
\end{itemize}

\section{Additional Visual Results}

\begin{figure}[!htbp]
    \centering
    \includegraphics[width=0.88\columnwidth]{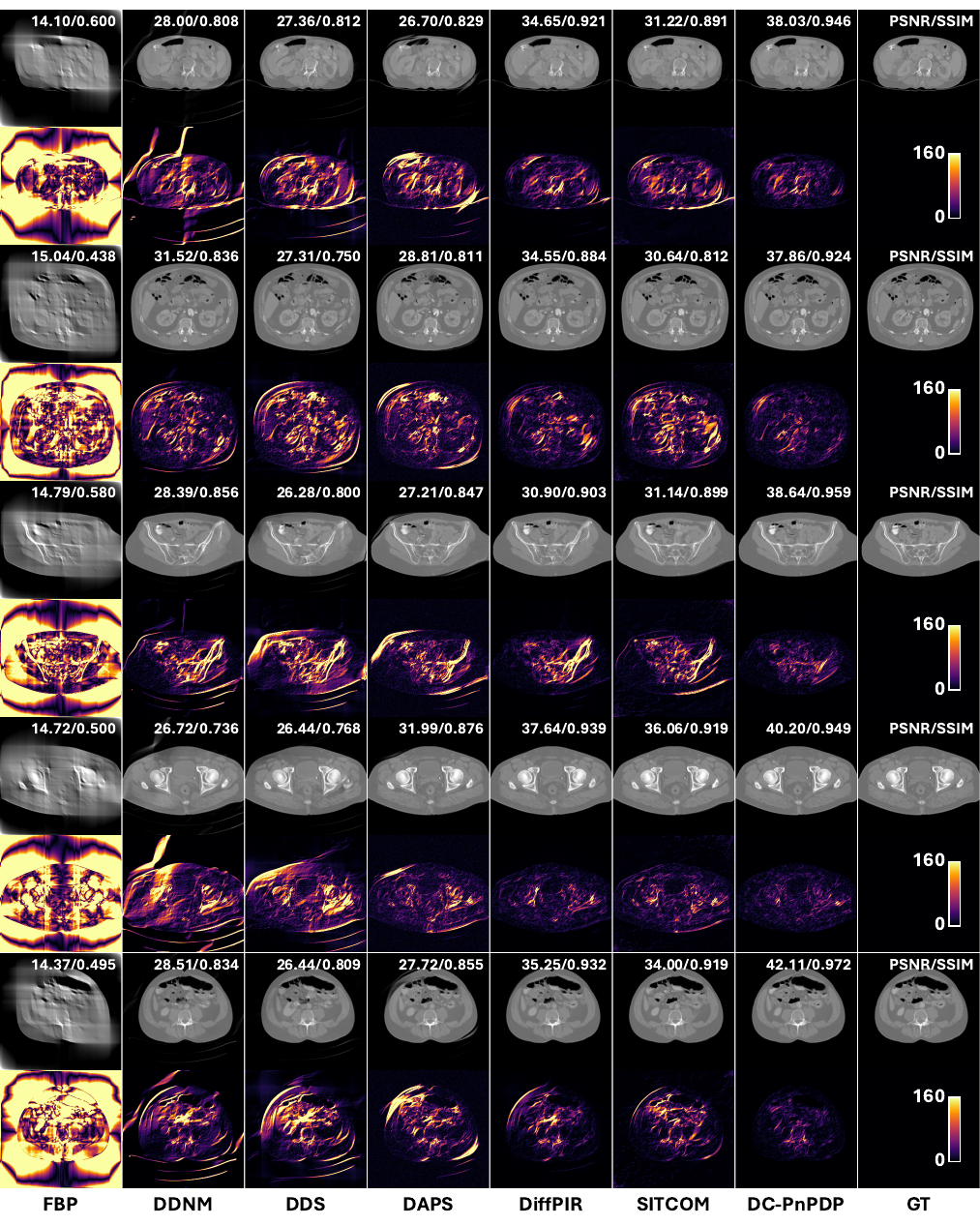}
    \caption{Additional qualitative results on limited-angle CT (LACT) reconstruction with the angular range $[0,90]^\circ$. The visualization window is set to $[-800,800]$ HU.}
    \label{fig:More_Vis_5_LACT_seed_15194776891779460304_chest}
\end{figure}

\begin{figure}[!htbp]
    \centering
    \includegraphics[width=0.88\columnwidth]{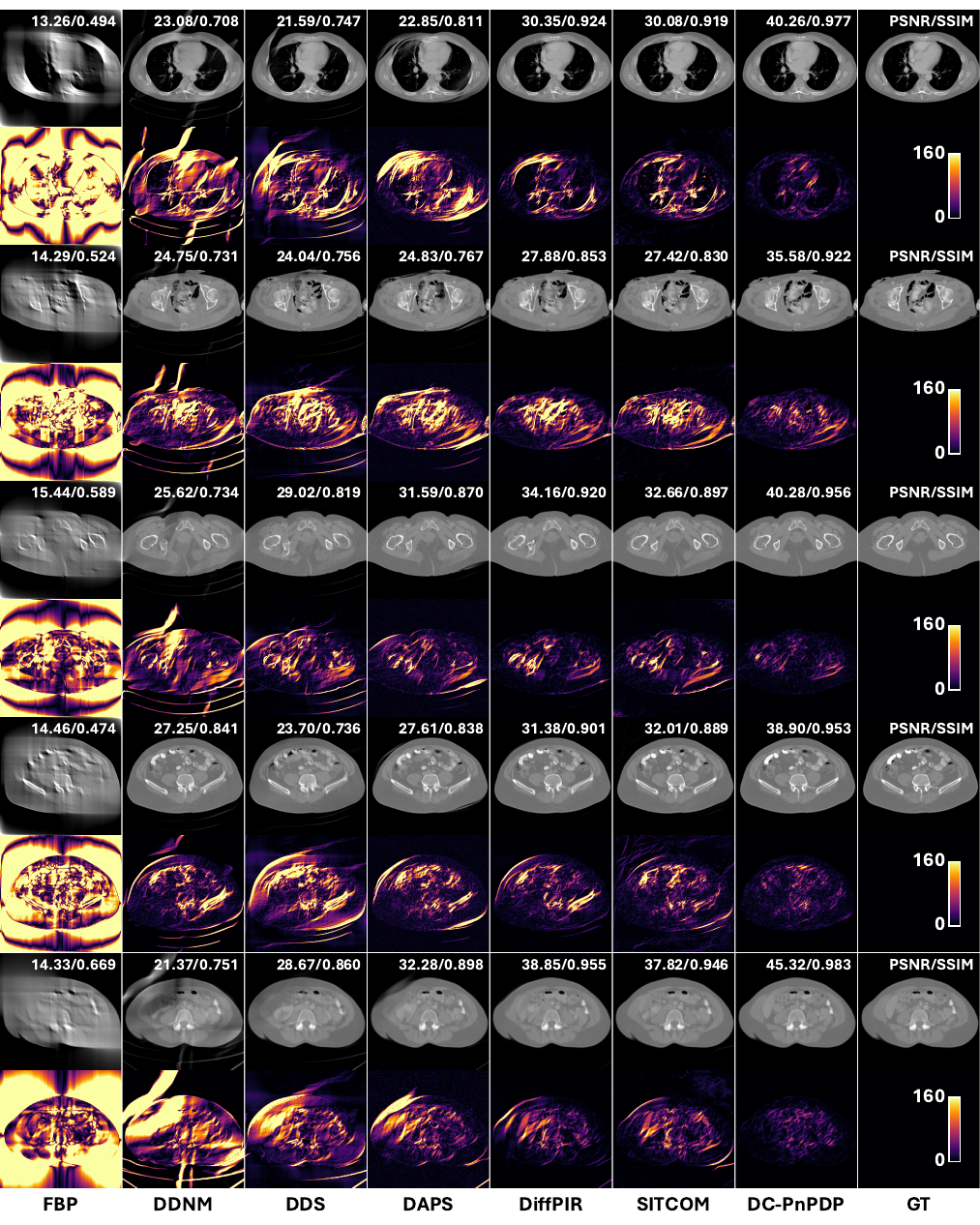}
    \caption{Additional qualitative results on limited-angle CT (LACT) reconstruction with the angular range $[0,90]^\circ$. The visualization window is set to $[-800,800]$ HU.}
    \label{fig:More_Vis_5_LACT_seed2026_other_f150_250}
\end{figure}

\begin{figure}[!htbp]
    \centering
    \includegraphics[width=0.88\columnwidth]{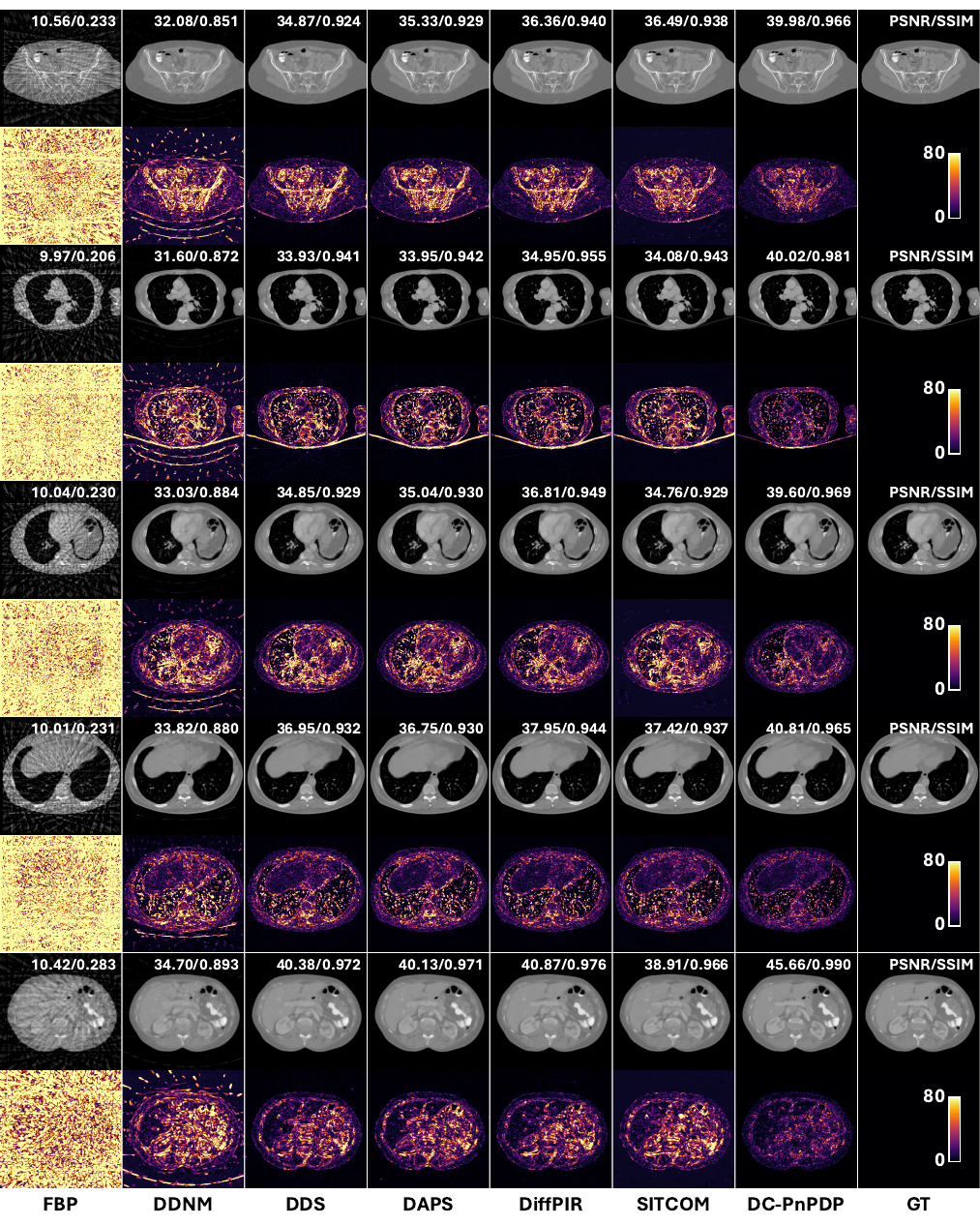}
    \caption{Additional qualitative results on sparse-view CT (SVCT) reconstruction with 20 views. The visualization window is set to $[-800,800]$ HU.}
    \label{fig:More_Vis_5_SVCT_seed2060_f800_800}
\end{figure}

\begin{figure}[!htbp]
    \centering
    \includegraphics[width=0.88\columnwidth]{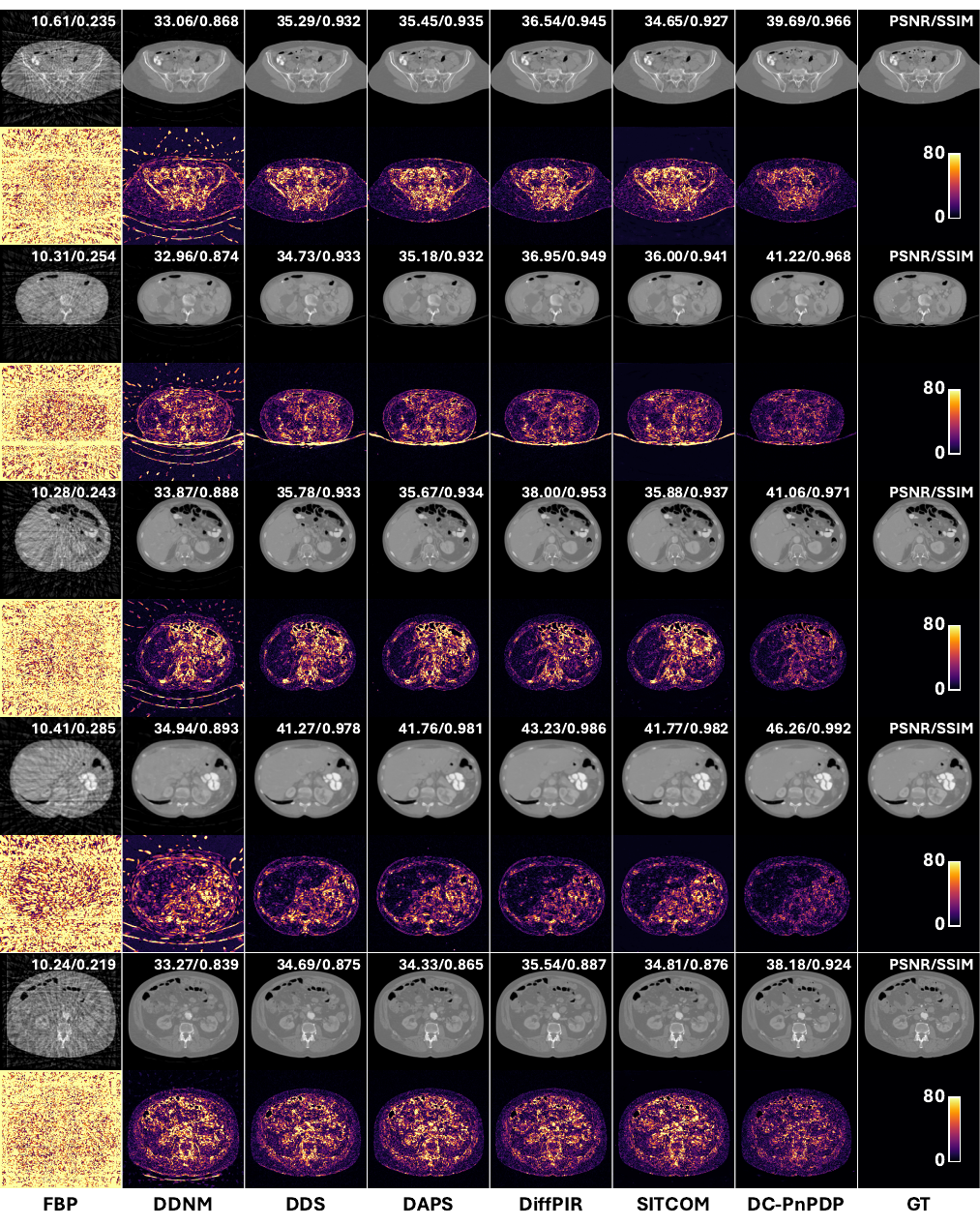}
    \caption{Additional qualitative results on sparse-view CT (SVCT) reconstruction with 20 views. The visualization window is set to $[-800,800]$ HU.}
    \label{fig:More_Vis_5_SVCT_seed2040_f150_250}
\end{figure}

\end{document}